\documentclass{article} % For LaTeX2e
\usepackage{iclr2024_conference,times}

\usepackage{amsmath,amsfonts,bm}

\def\eqref#1{equation~\ref{#1}}
\def\1{\bm{1}}

\DeclareMathAlphabet{\mathsfit}{\encodingdefault}{\sfdefault}{m}{sl}
\SetMathAlphabet{\mathsfit}{bold}{\encodingdefault}{\sfdefault}{bx}{n}

\usepackage[colorlinks=true, citecolor=blue]{hyperref}
\usepackage{makecell}
\usepackage{multirow} 
\usepackage{url}            % simple URL typesetting
\usepackage{booktabs}       % professional-quality tables
\usepackage{amsfonts}       % blackboard math symbols
\usepackage{nicefrac}       % compact symbols for 1/2, etc.

\usepackage{microtype}      % microtypography
\usepackage{thmtools}      % for restatable environment
\usepackage{colortbl}

\usepackage{graphicx}
\usepackage{amsmath}
\usepackage{amssymb}
\usepackage{subfigure}
\usepackage{caption} 
\usepackage{natbib}
\usepackage{enumitem}
\usepackage{pdfpages}
\usepackage{tabularray}

\usepackage{amsmath}
\usepackage{amssymb}
\usepackage{mathtools}
\usepackage{amsthm}
\usepackage{thmtools,thm-restate}

\theoremstyle{plain}
\newtheorem{theorem}{Theorem}[section]

\newtheorem{lemma}[theorem]{Lemma}

\theoremstyle{definition}

\theoremstyle{remark}

\usepackage[capitalize,noabbrev]{cleveref}

\usepackage{algorithm}  
\usepackage{algorithmicx}  
\usepackage{algpseudocode}
\usepackage{xcolor}

\usepackage{listings}

\usepackage{subcaption} 

\title{Quality-constrained Entropy Maximization Policy Optimization for LLM diversity}

\author{Haihui Pan, Yuzhong Hong, Kaichen Zhang, Shaoke Lv, Junwei Bao, Hongfei Jiang, Yang Song \\
Zuoyebang Education Technology\\
\texttt{\{panhaihui,lvshaoke,jianghongfei,songyang\}@zuoyebang.com} \\
\texttt{eugene.h.git@gmail.com} \\
\texttt{kzhangbi@connect.ust.hk} \\
\texttt{baojunwei001@gmail.com} \\
}

\begin{document}

\maketitle

\begin{abstract}
In many large language model (LLM) alignment applications, users expect not only high-quality outputs but also substantial diversity. However, existing methods often face a fundamental trade-off between these objectives: approaches that improve output quality tend to reduce diversity, while methods that increase diversity often do so at the expense of quality.
In this work, we propose Quality-constrained Entropy Maximization Policy Optimization (QEMPO), a novel framework that enhances the diversity of LLM outputs while explicitly preserving output quality. QEMPO is grounded in a strong theoretical foundation: we derive a closed-form analytical solution that provably maximizes entropy—a principled measure of diversity—subject to a quality constraint, with guarantees on optimality under the defined objective. Leveraging this solution, QEMPO naturally supports both online and offline training settings.
Empirical results demonstrate that QEMPO consistently improves output diversity without sacrificing quality, and in many cases yields gains in both dimensions compared to existing baselines, aligning with our theoretical guarantees.

\end{abstract}

\section{Introduction} 
% Large language models (LLMs) have demonstrated remarkable capabilities in natural language generation, achieving human-level performance in tasks such as language understanding and commonsense reasoning\citep{Qwen3,deepseek-r1,dubey2024llama}. Despite their powerful generative abilities, a critical issue has rapidly gained attention among researchers: output diversity \citep{li2025preserving,murthy2024one,padmakumar2023does,slocum2025diverse,sun2025curiosity,xu2025understanding,kirk2023understanding}. Generative diversity refers to the capacity of an LLM to produce text that captures the nuances and variations inherent in human expression. In many applications—particularly those without a single "correct" answer, such as creative writing—it is essential for LLMs to generate a broad range of valid and varied text outputs. A lack of diversity may result in monotonous and homogeneous content, ultimately limiting the models' applicability across downstream tasks.
Large language models (LLMs) have demonstrated remarkable capabilities in natural language generation, achieving human-level performance in tasks such as language understanding and commonsense reasoning~\citep{Qwen3,deepseek-r1,dubey2024llama}. Despite these advances, output diversity has emerged as a critical concern~\citep{li2025preserving,murthy2024one,padmakumar2023does,slocum2025diverse,sun2025curiosity,xu2025understanding,kirk2023understanding}. Generative diversity refers to the ability of an LLM to produce varied and nuanced text that reflects the richness of human expression, which is particularly important in open-ended tasks without a single correct answer. Insufficient diversity can lead to a narrow concentration of high-probability outputs, reducing creativity and informativeness while overlooking equally valid alternatives. This not only limits the model’s effectiveness in applications such as creative generation, but may also reinforce dominant patterns in the data, hindering robustness and generalization across diverse scenarios.

% After pre-training, LLMs typically use alignment methods to enhance their instruction-following capabilities. However, recent research indicates that while alignment methods—including Supervised Fine-Tuning (SFT), Reinforcement Learning from Human Feedback (RLHF)\citep{rlhf}, and Direct Preference Optimization (DPO) \citep{dpo}—significantly improve the quality, usefulness, and safety of model outputs, they are widely observed to reduce the diversity of the content generated by LLMs \citep{li2025preserving, slocum2025diverse, xu2025understanding}. For instance, RLHF tends to produce outputs that are more detectable, lengthy, and repetitive \citep{xu2025understanding}. It has been observed to decrease the entropy of the output distribution \citep{padmakumar2023does} and flatten conceptual diversity relative to the base model. 
However, existing LLM training paradigms often exhibit a fundamental trade-off between generation quality and diversity. Approaches such as Supervised Fine-Tuning (SFT), Reinforcement Learning from Human Feedback (RLHF) \citep{rlhf}, and Direct Preference Optimization (DPO) \citep{dpo} substantially improve output quality—enhancing factual accuracy, coherence, and alignment with human preferences—as well as safety. However, they are widely observed to reduce diversity, often leading to more generic, conservative, or repetitive responses \citep{li2025preserving, slocum2025diverse, xu2025understanding}. In contrast, alternative approaches \citep{shypula2025evaluating, zhang2021trading} promote more varied and creative outputs, but typically at the cost of reduced reliability and overall quality.

\begin{figure}[t]
    \centering
    \includegraphics[height=0.28\textwidth]{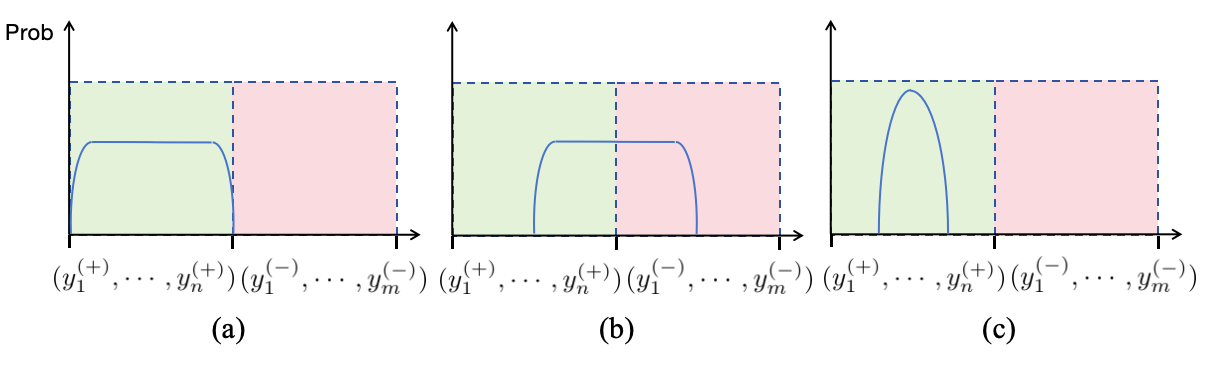}
    \caption{A schematic diagram of the output space distributions of three policies with varying quality and diversity. Here, \((y^{+}_{1}, \cdots, y^{+}_{n})\) represents the set of outputs that align with our preferences, while \((y^{-}_{1}, \cdots, y^{-}_{m})\) denotes the set of outputs that do not align with our preferences. (a) represents a policy whose outputs exhibit high diversity and very high quality; (b) represents a policy whose outputs exhibit high diversity but lower quality; (c) represents a policy whose outputs have limited diversity but possess high quality.}
    \label{fig:prob}
\end{figure}

% In this work, we revisit the roles of diversity and quality in alignment tasks. \cref{prop:one} demonstrates that alignment tasks can be decomposed into two components: maximizing diversity and maximizing quality. This means that both the quality and diversity of the output should be considered in alignment. Soling entropy maximization or quality maximization may prevent the alignment objective from reaching its optimal state. Although a higher entropy of the policy implies greater diversity in the outputs, exclusively focusing on entropy maximization could lead to a significant degradation in output quality. 
% %
% As shown in \cref{fig:prob}, policies (a) and (b) have the same entropy; however, the output quality of policy (a) is clearly higher since all its outputs are satisfactory. Conversely, if we pursue only quality maximization, we may obtain a policy with an output distribution like that of policy (c) in \cref{fig:prob}. Although all outputs from policy (c) are satisfactory, they might be overly concentrated in a specific region, thereby reducing diversity.

We first revisit the interplay between diversity and quality in alignment tasks. The alignment objective can be viewed as balancing two complementary requirements: generating high-quality outputs while maintaining sufficient diversity. Although these two aspects are closely related, they are not equivalent, and optimizing only one of them is generally insufficient.

Diversity is often linked to the entropy of the policy, as higher entropy implies a more spread-out output distribution. However, entropy alone does not capture whether the generated outputs are desirable. As illustrated in \cref{fig:prob}, policies (a) and (b) have the same entropy, yet policy (a) is clearly preferable because all its outputs are satisfactory, whereas policy (b) assigns probability mass to lower-quality responses. This shows that entropy cannot distinguish between distributions with similar diversity but different quality. On the other hand, focusing exclusively on quality can also be problematic. As shown by policy (c) in \cref{fig:prob}, a policy may produce only satisfactory outputs but concentrate them within a narrow region of the output space. While such a policy achieves high quality, its lack of diversity limits its expressiveness and robustness.

Taken together, these observations highlight that neither entropy maximization nor quality maximization alone is sufficient. Effective alignment requires jointly considering both aspects, ensuring that the policy produces outputs that are not only consistently high-quality but also sufficiently diverse.

In this paper, we propose Quality-constrained Entropy Maximization Policy Optimization (QEMPO), a novel method that can enhance the diversity of model outputs while ensuring their quality during the alignment process. QEMPO, along with its variant QEMPO-KL, is grounded in a principled information-theoretic approach: we derive closed-form analytical solutions that maximize policy entropy, subject to explicit quality and KL constraints. We further provide theoretical guarantees showing that QEMPO achieves higher entropy than RLHF under specific conditions. To facilitate practical implementation, we develop both online and offline optimization objectives. Empirical evaluations across multiple base models and tasks demonstrate that QEMPO consistently enhances output diversity across lexical, semantic, and syntactic dimensions without sacrificing quality, often yielding gains in both. The specific contributions of this work are as follows:

% Specifically, the contributions of this work are as follows:
% \begin{itemize} [leftmargin=0.5cm]
%     \item We prove that diversity and quality are two indispensable components in alignment tasks. Furthermore, we prove that Policy Gradient methods primarily optimize only the quality aspect of alignment.
%     \item  We establish that the analytical solution to the quality-constrained KL minimization problem shares the same functional form as the optimal policy derived from RLHF.
%     \item We propose QEMPO and QEMPO-KL to enhance policy diversity while maintaining output quality.
%     \item We prove that under specific conditions, the policy derived from QEMPO achieves higher entropy than that of QEMPO-KL, while QEMPO-KL yields higher entropy than the RLHF policy.
%     \item For both QEMPO and QEMPO-KL, we formulate offline and online optimization objectives. Experimental results validate the effectiveness of the proposed methods.
% \end{itemize}
%

\begin{itemize}[leftmargin=0.5cm]
    \item \textbf{Theoretical Insights:} We prove that quality and diversity are indispensable alignment components and show that Policy Gradient primarily optimizes only for quality.
    
    \item \textbf{Optimal Policy Analysis:} We establish that the analytical solution for quality-constrained KL minimization shares the same functional form as the RLHF optimal policy.
    
    \item \textbf{QEMPO Framework:} We propose QEMPO and QEMPO-KL to explicitly maximize policy entropy and diversity while maintaining rigorous quality constraints.
    
    \item \textbf{Entropy Hierarchy:} We prove that, under specific conditions, the entropy of the resulting policies follows the order: $\pi_{QEMPO} \ge \pi_{QEMPO-KL} \ge \pi_{RLHF}$.
    
    \item \textbf{Algorithmic Validation:} We develop both online and offline optimization objectives for our methods. Experimental results confirm significant diversity gains across lexical, semantic, and syntactic dimensions without sacrificing quality.
\end{itemize}

\section{Preliminary}

\subsection{RLHF and DPO}

\textcolor{black}{\textbf{Notation}. We denote random variables by capital letters and their realizations by lowercase letters. Let \( X \) represent the input prompt to the LLM, and \( Y \) denote the output. We denote the policy or LLM by $\pi(y|x)$, which can thus be regarded as a distribution. We use $r(x, y)$ to represent the reward model}.

\textbf{RLHF}. RLHF primarily consists of three steps. First, a model is fine-tuned to serve as the reference policy, denoted as $\pi_{\mathrm{ref}}(x|y)$. Subsequently, human-labeled preference data is collected to train a reward model. Finally, RLHF uses the trained reward model and the reference policy to optimize the following objective:
\begin{align}
\label{eq-rlhf}
    \max_{\pi} \mathbb{E}_{x\sim D,y\sim \pi}[r(x,y)]- \beta \mathrm{KL}(\pi(y|x) \parallel \pi_{\mathrm{ref}}(y|x) ).
\end{align}

\textbf{DPO}. \citep{dpo} first transforms \cref{eq-rlhf} to an equivalent minimization form. Subsequently, by leveraging the non-negativity of the KL divergence, it derives the following closed-form expression for the policy $\pi(x|y)$ at the minimum:
\begin{align}
    \pi(y|x) =  \pi_{\mathrm{RLHF}}(y|x) = \frac{1}{Z(x)}\pi_{\mathrm{ref}}(y|x)\exp(\frac{1}{\beta}r(x,y)).
\end{align}
Based on the equation above, $r(x,y)$ can be expressed as:
\begin{align}
r(x,y) = \beta \log \frac{\pi(y|x)}{\pi_{\mathrm{ref}}(y|x)}+\beta \log Z(x). 
\end{align}
Substituting $r(x,y)$ into the Bradley-Terry model and performing maximum likelihood estimation yields the DPO objective function:
\begin{align}
    \max_{\pi} \mathbb{E}_{y_{w} \succ  y_{l}\sim D}[\log \sigma ( \beta \log \frac{\pi(y_{w}|x)}{\pi_{\mathrm{ref}}(y_{w}|x)}-\beta \log \frac{\pi(y_{l}|x)}{\pi_{\mathrm{ref}}(y_{l}|x)})]
\end{align}

\subsection{Quality-constrained KL Minimization}
\textcolor{black}{As a conceptual exercise, we can formulate a problem that minimizes KL divergence from the reference policy subject to a quality constraint.} As shown in \cref{prop:kl-rlhf}, when \(1/\beta = \lambda\), the analytical solution to this constrained problem coincides in form with that of RLHF. This formal equivalence reveals an important insight: the RLHF objective can be satisfied by a policy that merely concentrates probability mass on a narrow subset of high-reward responses while remaining close to \(\pi_{\mathrm{ref}}\) in the KL sense. It does not inherently require the policy to explore or cover the full range of acceptable outputs. Consequently, RLHF often converges to a low-diversity policy, as illustrated in \cref{fig:prob}-(c). The proof of \cref{prop:kl-rlhf} is in the Appendix \ref{proof:p1}.
\begin{restatable}{proposition}{propone}
\label{prop:kl-rlhf} 
For the optimization problem: 
\begin{align}
\min_{\pi} \mathrm{KL}[\pi(y|x) || \pi_{\mathrm{ref}}(y|x)] \quad s.t. \quad \mathbb{E}_{\pi(y|x)}[r(x,y)] \ge \mathrm{R}
\nonumber
\end{align}
where $\mathrm{R} = \mathbb{E}{\pi^{}_{\mathrm{RLHF}}}[r(x,y)]$. The analytical solution that minimizes this optimization objective is 
$\pi (y|x)
   =   \frac{\pi_{\mathrm{ref}}(y|x) \exp( \lambda r(x,y)) }{Z(x)} $ where $Z(x) = \sum_{y}\pi_{\mathrm{ref}}(y|x) \exp( \lambda r(x,y))$ and \textcolor{black}{$\lambda$ is the Lagrange multiplier for this optimization objective}.

\end{restatable}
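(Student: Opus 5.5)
We treat the problem pointwise in $x$: for a fixed prompt, $\pi(\cdot|x)$ is a probability vector over a finite (or countable) output set. The objective $\mathrm{KL}[\pi\|\pi_{\mathrm{ref}}]$ is strictly convex in $\pi$. Both the quality constraint $\mathbb{E}_{\pi}[r]\ge \mathrm{R}$ and the normalization $\sum_y \pi(y|x)=1$ are linear. The problem is therefore convex, and we will solve it with a Lagrangian, i.e.\ via the KKT conditions.

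\textbf{Lagrangian and stationarity.} We introduce a multiplier $\lambda\ge 0$ for the reward constraint and $\mu$ for normalization, and form
\begin{align}
\mathcal{L}(\pi,\lambda,\mu)=\sum_y \pi(y|x)\log\frac{\pi(y|x)}{\pi_{\mathrm{ref}}(y|x)}-\lambda\Big(\sum_y \pi(y|x) r(x,y)-\mathrm{R}\Big)+\mu\Big(\sum_y\pi(y|x)-1\Big).
\nonumber
\end{align}
Setting $\partial\mathcal{L}/\partial\pi(y|x)=0$ gives
\[
\log\frac{\pi(y|x)}{\pi_{\mathrm{ref}}(y|x)}+1-\lambda r(x,y)+\mu=0,
\]
so $\pi(y|x)\propto\pi_{\mathrm{ref}}(y|x)\exp(\lambda r(x,y))$. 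Imposing normalization fixes $e^{-1-\mu}=1/Z(x)$, which yields the stated form. Nonnegativity of $\pi$ is automatic, since the exponential form is positive wherever $\pi_{\mathrm{ref}}>0$, so no extra multipliers are needed.

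\textbf{Optimality.} To show this stationary point is the global minimizer, rather than just a critical point, we use the following direct argument. For any feasible $\pi$ and the tilted distribution $\pi_\lambda$,
\[
\mathrm{KL}[\pi\|\pi_{\mathrm{ref}}]=\mathrm{KL}[\pi\|\pi_\lambda]+\lambda\,\mathbb{E}_\pi[r]-\log Z(x).
\]
Since $\lambda\ge 0$ and $\mathbb{E}_\pi[r]\ge \mathrm{R}=\mathbb{E}_{\pi_\lambda}[r]$, this gives
\[
\mathrm{KL}[\pi\|\pi_{\mathrm{ref}}]\ge \lambda \mathrm{R}-\log Z(x)=\mathrm{KL}[\pi_\lambda\|\pi_{\mathrm{ref}}].
\]
Equality holds iff $\pi=\pi_\lambda$. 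This argument parallels the KL non-negativity trick used for DPO.

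\textbf{Main obstacle: existence of a suitable $\lambda\ge 0$.} The step above requires a $\lambda\ge 0$ for which the constraint is tight, i.e.\ $\mathbb{E}_{\pi_\lambda}[r]=\mathrm{R}$. Two facts supply this:
\begin{itemize}[leftmargin=0.5cm]
\item The map $\lambda\mapsto\mathbb{E}_{\pi_\lambda}[r]$ is continuous and nondecreasing, since its derivative is $\mathrm{Var}_{\pi_\lambda}(r)\ge 0$.
\item By definition, $\mathrm{R}=\mathbb{E}_{\pi_{\mathrm{RLHF}}}[r]$, and $\pi_{\mathrm{RLHF}}$ is exactly $\pi_\lambda$ at $\lambda=1/\beta$.
\end{itemize}
Hence $\lambda=1/\beta$ works, and it is positive. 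If $\mathrm{R}\le\mathbb{E}_{\pi_{\mathrm{ref}}}[r]$, the constraint would be inactive and the solution would be $\lambda=0$, i.e.\ $\pi_{\mathrm{ref}}$ itself; the formula still covers that case.

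Complementary slackness then holds automatically, which completes the KKT verification and identifies the solution with the RLHF policy at $\lambda=1/\beta$.
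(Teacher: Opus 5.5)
Your proof is correct. Its first half is the paper's own argument: the paper forms $\mathrm{KL}[\pi\|\pi_{\mathrm{ref}}]+\lambda(\mathrm{R}-\sum_y\pi r)$, sets the derivative to zero to get $\pi=\pi_{\mathrm{ref}}\exp(\lambda r-1)$, and renormalizes.

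You differ in everything after the stationarity condition. The paper asserts that convexity of KL makes the zero-gradient point the minimizer. It uses no normalization multiplier: its division by $\sum_y\pi_{\mathrm{ref}}\exp(\lambda r-1)$ tacitly plays the role of your $\mu$. It also never checks dual feasibility or complementary slackness, and it leaves $\lambda$ unspecified.

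You instead give a direct optimality certificate, the identity $\mathrm{KL}[\pi\|\pi_{\mathrm{ref}}]=\mathrm{KL}[\pi\|\pi_\lambda]+\lambda\mathbb{E}_\pi[r]-\log Z(x)$. It yields global optimality and uniqueness on its own, so the stationarity computation becomes just a way to guess the candidate. You then pin down the multiplier by noting that $\mathrm{R}$ is attained at $\lambda=1/\beta$ by definition. This buys something the paper's proof does not deliver: the minimizer is exactly $\pi_{\mathrm{RLHF}}$, not merely a policy of the same functional form. That is the real content behind the main-text remark that the solution coincides with RLHF when $1/\beta=\lambda$.

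Your monotonicity bullet is not needed for existence, since you exhibit $\lambda=1/\beta$ directly. It does give uniqueness of $\lambda$ when $r$ is non-constant on the support of $\pi_{\mathrm{ref}}$. The paper's version is shorter; yours is the one that actually closes the argument.
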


\section{Quality-constrained Entropy Maximization Policy Optimization}

\subsection{Quality and Diversity in Alignment}
In this subsection, we revisit the roles of quality and diversity in alignment tasks. We categorize the output into two subsets: \( Y^{+} \) consists of outputs that satisfy the given prompt \( X \), while \( Y^{-} \) comprises outputs that do not fulfill \( X \). Let \( \pi^*(y|x) \) represent our ideal policy. From a practical standpoint, we aim for \( \pi^*(y|x) \) to consistently generate outputs that meet the requirements of the given prompt while maximizing output diversity as much as possible. Specifically, \( \pi^*(y|x) \) should possess the following two fundamental properties:

\textbf{Quality}: $\sum_{y\in Y^{+}}\pi^{*}(y|x)  = 1- \epsilon$ where $\epsilon$ is a sufficiently small number.

\textbf{Diversity}: $\pi^{*}(y^{+}_{i} \mid  x) =  \pi^{*}(y^{+}_{j} \mid x)$, $\forall y^{+}_{i},y^{+}_{j} \in \mathrm{Y}^{+}$.

Let $\pi(y|x)$ denote the policy to be aligned. The alignment objective can then be defined as:
\begin{align}
    \min_{\pi} \mathrm{KL}(\pi(y|x) || \pi^{*}(y|x)). 
\end{align}
Specifically, we use the entropy of the policy, $H_{\pi}(Y|X)$, to represent diversity, and use $\sum_{y\in Y^{+}}\pi(y|x)$ to represent the quality of the policy output. From the perspective of information theory, entropy represents the uncertainty of the output, meaning that the probability of the output should not be concentrated in specific areas, which aligns with the goal of diversity.
% Using entropy to represent diversity aligns with our above definition of diversity. When the probabilities of the output samples are more similar, the output is less concentrated on specific samples, and the entropy of the output increases.
\cref{prop:one} indicates that to achieve the ideal state of $\pi(y|x)$, both the output diversity and quality of $\pi(y|x)$ should be improved, which also suggests that for alignment tasks, diversity and quality should be considered simultaneously. 

It is important to note that if we excessively increase the model's output probability on a small subset of $Y^{+}$, it is possible to achieve a relatively large value for $\sum_{y\in Y^{+}}\pi(y|x)$, thereby improving quality. However, in this case, diversity will inevitably decrease, making it impossible to achieve the optimal alignment effect. We further demonstrate that optimization objectives like Policy Gradient \citep{pg} inherently only optimize the quality in alignment tasks. A potential drawback of such an optimization objective is that if the policy focuses solely on a single \( y^{+} \) during the learning process, it can still achieve the effect of maximizing the optimization objective, but this is not what we actually desire. The proof of \cref{prop:one} is provided in Appendix \ref{proof:2}, and the proof of Corollary \ref{cly:one} is provided in Appendix \ref{Corollary:3.1}.

\begin{restatable}{proposition}{propOne}
\label{prop:one}
Let \(\pi(y|x)\) denote the policy to be aligned and \(\pi^*(y|x)\) the ideal policy. Furthermore, by the principle of maximum entropy, we also assume \(\pi^{*}\) is uniform over \(Y^{-}\) in the absence of any discriminating information. The alignment objective \(\mathrm{KL}(\pi(y|x) \parallel \pi^*(y|x))\) can be minimized by jointly maximizing quality and diversity.
\end{restatable}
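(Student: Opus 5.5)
The plan is to write $\pi^*$ in closed form and expand $\mathrm{KL}(\pi\parallel\pi^*)$ directly. Let $n=|Y^{+}|$ and $m=|Y^{-}|$. The diversity property makes $\pi^*$ uniform on $Y^{+}$, and by the stated maximum-entropy assumption it is also uniform on $Y^{-}$. The quality property then forces
\begin{align}
\pi^*(y|x)=\frac{1-\epsilon}{n}\ \text{for } y\in Y^{+}, \qquad \pi^*(y|x)=\frac{\epsilon}{m}\ \text{for } y\in Y^{-}. \nonumber
\end{align}
Write $q(x)=\sum_{y\in Y^{+}}\pi(y|x)$ for the quality of $\pi$, so that $\sum_{y\in Y^{-}}\pi(y|x)=1-q(x)$.

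Next expand the divergence as $\mathrm{KL}(\pi\parallel\pi^*)=-H_\pi(Y|X)-\mathbb{E}_{x}\sum_y \pi(y|x)\log\pi^*(y|x)$. Because $\log\pi^*$ is constant on each of $Y^{+}$ and $Y^{-}$, the cross term collapses to
\begin{align}
-q(x)\log\frac{1-\epsilon}{n}-(1-q(x))\log\frac{\epsilon}{m}. \nonumber
\end{align}
This gives the decomposition
\begin{align}
\mathrm{KL}(\pi\parallel\pi^*) = -H_\pi(Y|X) - \mathbb{E}_x\Big[q(x)\log\frac{(1-\epsilon)m}{\epsilon n}\Big] + \log\frac{m}{\epsilon}. \nonumber
\end{align}
The last term does not depend on $\pi$. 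Since $\epsilon$ is sufficiently small, we may assume $\epsilon<m/(n+m)$, which makes the coefficient $\log\frac{(1-\epsilon)m}{\epsilon n}$ strictly positive. Minimizing the KL is therefore equivalent to maximizing $H_\pi(Y|X)+c\,\mathbb{E}_x[q(x)]$ with $c>0$. This is a joint maximization of diversity (entropy) and quality, which is the claim.

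To complete the argument, I would verify that the combined objective attains its optimum exactly at $\pi=\pi^*$. Split the entropy by the grouping $Y^{\pm}$:
\begin{align}
H_\pi = h(q) + q\,H(\pi|Y^{+}) + (1-q)\,H(\pi|Y^{-}), \nonumber
\end{align}
where $h$ is the binary entropy. For fixed $q$, the conditional entropies are maximized by the uniform distributions within each group, giving $\log n$ and $\log m$. The remaining one-dimensional problem in $q$ is concave, and its maximizer is $q=1-\epsilon$. This shows the two components are not redundant: entropy alone would push $q$ toward $n/(n+m)$, and quality alone would allow concentration on a single $y^{+}$.

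The main obstacle is that "can be minimized by jointly maximizing" is an informal claim. Entropy and quality can conflict, so the statement has to be read as maximizing their positively weighted sum. It also relies on $\epsilon$ being small enough for the weight $c$ to be positive. I will state both conditions explicitly rather than claim that the two terms are separately maximized.
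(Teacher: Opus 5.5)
Your proposal is correct and follows the paper's proof. Both write $\pi^*$ in closed form as $\frac{1-\epsilon}{|Y^{+}|}$ on $Y^{+}$ and $\frac{\epsilon}{|Y^{-}|}$ on $Y^{-}$, split $-\mathrm{KL}(\pi\parallel\pi^*)$ into $H_\pi(Y|X)$ plus a cross term that is affine in the quality $\sum_{y\in Y^{+}}\pi(y|x)$ with positive slope, and read this as joint maximization of diversity and quality. Your condition $\epsilon<m/(n+m)$ is a sharper version of the paper's $\frac{1-\epsilon}{|Y^{+}|}\gg\frac{\epsilon}{|Y^{-}|}$, and your grouping-entropy check that the optimum is at $q=1-\epsilon$ is an extra step the paper omits, which also follows at once from $\mathrm{KL}\ge 0$ with equality iff $\pi=\pi^*$.
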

\begin{restatable}{corollary}{clyOne}
\label{cly:one}
For Policy Gradient, if the reward function is defined as $r(x,y) =\left\{\begin{matrix}
 1 & y \in Y^{+}\\
 0 & y \in Y^{-}
\end{matrix}\right.$, then Policy Gradient only optimizes the quality component in alignment tasks.
\end{restatable}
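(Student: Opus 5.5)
The plan is to compute the policy-gradient objective explicitly under the binary reward and compare it with the decomposition of $\mathrm{KL}(\pi \parallel \pi^{*})$ obtained in \cref{prop:one}. With $r(x,y)=1$ on $Y^{+}$ and $0$ on $Y^{-}$, the Policy Gradient objective for a fixed prompt $x$ is
\begin{align}
J(\pi) = \mathbb{E}_{y\sim\pi(\cdot|x)}[r(x,y)] = \sum_{y\in Y^{+}} \pi(y|x),
\nonumber
\end{align}
which is exactly the quantity the paper uses to measure quality. Its gradient is
\begin{align}
\nabla_\theta J = \mathbb{E}_{\pi}\big[r(x,y)\nabla_\theta \log \pi(y|x)\big] = \nabla_\theta \sum_{y\in Y^{+}}\pi(y|x).
\nonumber
\end{align}
So Policy Gradient ascends only the quality term and contains no entropy term.

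Next, I would recall the form of the alignment objective from the proof of \cref{prop:one}. Write $q=\sum_{y\in Y^{+}}\pi(y|x)$. Since $\pi^{*}$ is uniform on $Y^{+}$ with total mass $1-\epsilon$, and uniform on $Y^{-}$ with mass $\epsilon$, we get
\begin{align}
\mathrm{KL}(\pi\parallel\pi^{*}) = -H_\pi(Y|X) - q\log\frac{1-\epsilon}{|Y^{+}|} - (1-q)\log\frac{\epsilon}{|Y^{-}|}.
\nonumber
\end{align}
This expression depends on $\pi$ only through two pieces. The first is the entropy term, which carries diversity. The second is a term affine in $q$, which carries quality and, for small $\epsilon$, is decreasing in $q$. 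The Policy Gradient objective $J=q$ matches the second piece only, and ignores $-H_\pi$.

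To make ``only optimizes quality'' concrete, I would add a short argument about maximizers. Any $\pi$ with $q=1$ maximizes $J$, including a point mass on a single $y^{+}\in Y^{+}$, whose entropy is zero. So the set of maximizers of $J$ is indifferent to entropy. By contrast, the KL minimizer (the uniform-on-$Y^{+}$ policy $\pi^*$) is selected only once the entropy term is included.

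The main obstacle is interpretive rather than technical: the claim ``only optimizes the quality component'' has to be given a precise meaning. I would state it as two facts. First, the Policy Gradient objective and its gradient equal those of the quality term $q$ alone. Second, this objective is invariant under any redistribution of mass within $Y^{+}$, so it exerts no pressure on diversity. A subtlety to address is that with softmax parameterization the gradient can indirectly change entropy. I would note that this change is incidental: the objective itself has no entropy dependence, and its optima include minimum-entropy policies.
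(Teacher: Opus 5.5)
Your proposal is correct and takes the same route as the paper. The paper's proof substitutes the binary reward into $\sum_{y} r(x,y)\pi(y|x)$, splits the sum over $Y^{+}$ and $Y^{-}$, and observes that the objective reduces to the quality term $\sum_{y\in Y^{+}}\pi(y|x)$, with nothing more. Your extra points (the gradient identity, the comparison with the KL decomposition from \cref{prop:one}, and the observation that a zero-entropy point mass on a single $y^{+}$ already maximizes the objective) go beyond the paper. They are consistent with it and give a precise meaning to the informal phrase ``only optimizes quality.''
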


\subsection{Quality and KL-constrained Entropy Maximization}
%Whether from the perspective of the original optimization objective of RLHF or the optimization objective in \cref{prop:kl-rlhf}, RLHF itself does not explicitly encourage diversity in generation.
Unlike the optimization objective in \cref{prop:kl-rlhf}, we directly replace the minimizing of KL divergence with maximizing the entropy to explicitly encourage diversity in generation, while treating KL divergence and quality as constraints. \cref{prop:kl-qempo} provides the analytical solution form of the corresponding optimization problem. The difference between the analytical solution of RLHF and that of \cref{prop:kl-qempo} lies in the addition of an exponential term to $\pi_{\mathrm{ref}}(y|x)$. Since $0 < \frac{\lambda_{2}}{\lambda_{2}+1} < 1$, the amplifying effect of the exponential term becomes greater when $\pi_{\mathrm{ref}}(y|x)$ is smaller. For example, $0.9^{0.5} \approx 0.9487$, with an amplification factor of 1.05, while $0.3^{0.5} \approx 0.5477$, with an amplification factor of 1.83.  Compared to not adding the exponential term, the effect of $\frac{\lambda_{2}}{\lambda_{2}+1}$ is to make the output distribution more uniform. Specifically, \cref{prop:entropy-kl} shows that when $\frac{1}{\beta} = \frac{\lambda_{1}}{\lambda_{2}}$, $\pi_{\mathrm{QEMPO-KL}}$ has higher entropy than $\pi_{\mathrm{RLHF}}(y|x)$. The proofs of \cref{prop:kl-qempo} and \cref{prop:entropy-kl} are provided in Appendix \ref{p3} and Appendix \ref{p4}, respectively. 

\begin{restatable}{proposition}{propTwo}
\label{prop:kl-qempo} 
For the optimization problem: $ \max_{\pi} H_{\pi}(Y|X) \quad s.t.  \left\{\begin{matrix}
\mathbb{E}_{\pi(y|x)}[r(x,y)] \ge \mathrm{R} 
\\
\mathrm{KL}(\pi_{}\parallel \pi_{\mathrm{ref}} )\le  \mathrm{K}
\end{matrix}\right.$
where $\mathrm{R} = \mathbb{E}_{\pi_{\mathrm{RLHF}}}[r(x,y)], \mathrm{K}= \mathrm{KL}(\pi_{\mathrm{RLHF}}\parallel \pi_{\mathrm{ref}} )$. The analytical solution that minimizes this optimization objective is 
$  \pi_{\mathrm{QEMPO-KL}}
  =  \frac{\pi_{\mathrm{ref}}^{\frac{\lambda_{2}}{\lambda_{2}+1}} \exp( \frac{\lambda_{1}}{\lambda_{2}+1} r(x,y))}{Z(x)}  $ where
  \textcolor{black}{$\lambda_1$ and $\lambda_2$ are the Lagrange multipliers for this optimization objective} and $Z(x) = \sum_{y}\pi_{\mathrm{ref}}^{\frac{\lambda_{2}}{\lambda_{2}+1}} \exp( \frac{\lambda_{1}}{\lambda_{2}+1} r(x,y))$.
\end{restatable}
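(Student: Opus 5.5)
We fix a prompt $x$ and treat $\pi(\cdot|x)$ as a probability vector over the (finite) output space. Then $H_{\pi}(Y|X)$, $\mathbb{E}_{\pi}[r]$ and $\mathrm{KL}(\pi\parallel\pi_{\mathrm{ref}})$ all decouple across prompts, so it suffices to solve the per-prompt problem. We rewrite the problem as the convex program
\[
\min_{\pi}\ \sum_y \pi(y)\log\pi(y)\quad \text{s.t.}\quad \mathrm{R}-\sum_y\pi(y)r(x,y)\le 0,\qquad \sum_y \pi(y)\log\frac{\pi(y)}{\pi_{\mathrm{ref}}(y)}-\mathrm{K}\le 0,\qquad \sum_y\pi(y)=1 .
\]
The objective (negative entropy) is convex, the reward constraint is linear, and the KL constraint is convex in $\pi$. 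Slater's condition holds for $\mathrm{R}=\mathbb{E}_{\pi_{\mathrm{RLHF}}}[r]$ and $\mathrm{K}=\mathrm{KL}(\pi_{\mathrm{RLHF}}\parallel\pi_{\mathrm{ref}})$ whenever a strictly feasible point exists; otherwise we at least have $\pi_{\mathrm{RLHF}}$ as a feasible point. Hence the KKT conditions are sufficient for optimality, and necessary under Slater.

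Next we form the Lagrangian with multipliers $\lambda_1\ge 0$ for the reward constraint, $\lambda_2\ge 0$ for the KL constraint, and $\mu$ for normalization:
\[
\mathcal{L}=\sum_y\pi\log\pi+\lambda_1\Big(\mathrm{R}-\sum_y\pi r\Big)+\lambda_2\Big(\sum_y\pi\log\tfrac{\pi}{\pi_{\mathrm{ref}}}-\mathrm{K}\Big)+\mu\Big(\sum_y\pi-1\Big).
\]
Setting $\partial\mathcal{L}/\partial\pi(y)=0$ gives
\[
(1+\lambda_2)\big(\log\pi(y)+1\big)-\lambda_2\log\pi_{\mathrm{ref}}(y)-\lambda_1 r(x,y)+\mu=0,
\]
so
\[
\log\pi(y)=\frac{\lambda_2}{\lambda_2+1}\log\pi_{\mathrm{ref}}(y)+\frac{\lambda_1}{\lambda_2+1}r(x,y)+\text{const}.
\]
Exponentiating and absorbing the constant via normalization yields the stated form, with $Z(x)=\sum_y\pi_{\mathrm{ref}}^{\lambda_2/(\lambda_2+1)}\exp\!\big(\tfrac{\lambda_1}{\lambda_2+1}r\big)$. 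Because of the $\log\pi$ term the solution is strictly positive on the support of $\pi_{\mathrm{ref}}$, so nonnegativity constraints are inactive and need no separate multipliers.

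Finally, for optimality we note that for fixed $\lambda_1,\lambda_2\ge0$ the Lagrangian is strictly convex in $\pi$ on the simplex: the coefficient of $\sum\pi\log\pi$ is $1+\lambda_2>0$. The stationary point is therefore its unique minimizer. Complementary slackness then fixes $\lambda_1,\lambda_2$: they are chosen so that each constraint is either tight or has a zero multiplier. Weak duality combined with feasibility gives global optimality of the resulting $\pi_{\mathrm{QEMPO\text{-}KL}}$.

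The main obstacle is justifying the existence of multipliers that realize the constraints, i.e.\ strong duality and attainment. In the degenerate case where $\pi_{\mathrm{RLHF}}$ is the only feasible point, Slater fails. Our first attempt is to argue via compactness of the simplex and continuity that a maximizer exists, and then to invoke KKT under a constraint qualification. If that fails, we simply state the result as the form of the KKT stationary solution, which is what the statement asserts.
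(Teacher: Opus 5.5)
\textbf{Verdict: genuine gap.} Your stationarity computation is the paper's argument (build the Lagrangian in $\lambda_1,\lambda_2$, set the derivative to zero, normalize), and your version is cleaner. With the explicit multiplier $\mu$, your derivative $(1+\lambda_2)(\log\pi+1)-\lambda_2\log\pi_{\mathrm{ref}}-\lambda_1 r+\mu$ is correct and the normalization step is justified. The paper has no normalization multiplier, and its displayed derivative contains slips (e.g.\ $-\lambda_2\pi_{\mathrm{ref}}$ for $-\lambda_2\ln\pi_{\mathrm{ref}}$) that only wash out after normalizing. The gap is in your optimality paragraph. You say complementary slackness ``fixes'' $\lambda_1,\lambda_2$ and that weak duality then certifies the stated distribution as the global optimum. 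That fails, because the degenerate case you leave to the last paragraph is not an edge case: for $\mathrm{R}=\mathbb{E}_{\pi_{\mathrm{RLHF}}}[r]$ and $\mathrm{K}=\mathrm{KL}(\pi_{\mathrm{RLHF}}\parallel\pi_{\mathrm{ref}})$ it always occurs. With $\pi_{\mathrm{RLHF}}=\pi_{\mathrm{ref}}e^{r/\beta}/Z_\beta$ one has $\mathrm{K}=\mathrm{R}/\beta-\log Z_\beta$, and for every $\pi$
\[
\mathrm{KL}(\pi\parallel\pi_{\mathrm{ref}})-\mathrm{K}=\mathrm{KL}(\pi\parallel\pi_{\mathrm{RLHF}})+\tfrac{1}{\beta}\big(\mathbb{E}_{\pi}[r]-\mathrm{R}\big).
\]
So any $\pi$ meeting both constraints has $\mathrm{KL}(\pi\parallel\pi_{\mathrm{RLHF}})\le 0$. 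The feasible set is the single point $\pi_{\mathrm{RLHF}}$.

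\textbf{Consequences.} Slater never holds. The Mangasarian--Fromovitz condition also fails at $\pi_{\mathrm{RLHF}}$: there the gradient of the KL constraint, $r/\beta+\mathrm{const}$, is a negative multiple of the reward-constraint gradient $-r$, modulo the normalization direction. More fundamentally, the stationary point $\propto\pi_{\mathrm{ref}}^{a}e^{br}$ with $a=\lambda_2/(\lambda_2+1)<1$ is feasible only if it equals $\pi_{\mathrm{RLHF}}$. That forces $\log\pi_{\mathrm{ref}}$ to be affine in $r$ on the support of $\pi_{\mathrm{ref}}$. Generically, then, no finite $\lambda_1,\lambda_2\ge 0$ satisfy the KKT system at all, no constraint qualification can hold at the optimum, and your compactness-plus-KKT route is blocked as well. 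The actual maximizer is $\pi_{\mathrm{RLHF}}$, which has the stated form only in the limit $\lambda_2\to\infty$, $\lambda_1/(\lambda_2+1)\to 1/\beta$. What can be proved is exactly your fallback: for given multipliers, the displayed distribution is the unique minimizer of the Lagrangian over the simplex, and your strict-convexity remark is right. That is also all the paper's proof establishes; it asserts the minimum sits where the gradient vanishes and never checks feasibility or existence of the multipliers. With the optimality claims dropped, your argument coincides with the paper's. With them kept, it asserts something false for these $\mathrm{R},\mathrm{K}$.
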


\begin{restatable}{proposition}{propFour}
\label{prop:entropy-kl} 
When $\frac{1}{\beta} = \frac{\lambda_{1}}{\lambda_{2}}$, it satisfies: $H_{\pi_{\mathrm{QEMPO-KL}}}(Y|X) \ge  H_{\pi_{\mathrm{RLHF}}}(Y|X)$.
\end{restatable}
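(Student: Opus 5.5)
Under the hypothesis $\frac{1}{\beta}=\frac{\lambda_1}{\lambda_2}$, I plan to show that $\pi_{\mathrm{QEMPO\text{-}KL}}$ is a \emph{tempered} (flattened) version of $\pi_{\mathrm{RLHF}}$. I will then use monotonicity of entropy along the tempering path. Fix a prompt $x$ and write $\alpha=\frac{\lambda_2}{\lambda_2+1}\in(0,1)$, using $\lambda_2\ge 0$ from the KKT conditions of \cref{prop:kl-qempo}. Then $\frac{\lambda_1}{\lambda_2+1}=\alpha\frac{\lambda_1}{\lambda_2}=\frac{\alpha}{\beta}$. Substituting into the closed form of \cref{prop:kl-qempo} gives
\begin{align}
\pi_{\mathrm{QEMPO\text{-}KL}}(y|x)\propto \bigl(\pi_{\mathrm{ref}}(y|x)\exp(\tfrac{1}{\beta}r(x,y))\bigr)^{\alpha}\propto \pi_{\mathrm{RLHF}}(y|x)^{\alpha}. \nonumber
\end{align}
The normalizer $Z(x)$ of $\pi_{\mathrm{RLHF}}$ is absorbed into the new normalizing constant. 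So, writing $p=\pi_{\mathrm{RLHF}}(\cdot|x)$, we have $\pi_{\mathrm{QEMPO\text{-}KL}}(\cdot|x)=q_\alpha$, where $q_t(y)=p(y)^t/\sum_{y'}p(y')^t$.

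The key step is to show that $t\mapsto H(q_t)$ is nonincreasing on $(0,\infty)$. Then $H(q_\alpha)\ge H(q_1)=H(p)$ because $\alpha<1$. Let $\ell(y)=\log p(y)$ and $A(t)=\log\sum_y p(y)^t$, so that $\log q_t=t\ell-A(t)$. Then $H(q_t)=A(t)-t\,\mathbb{E}_{q_t}[\ell]$. The standard exponential-family identities give $A'(t)=\mathbb{E}_{q_t}[\ell]$ and $\frac{d}{dt}\mathbb{E}_{q_t}[\ell]=\mathrm{Var}_{q_t}(\ell)$. Hence
\begin{align}
\frac{d}{dt}H(q_t)=-t\,\mathrm{Var}_{q_t}(\log p)\le 0 . \nonumber
\end{align}
Care is needed only when $p$ has zeros. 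Those outputs receive zero mass under every $q_t$, so I will restrict to the support of $p$, and the sums remain finite. Integrating from $\alpha$ to $1$ gives the per-prompt inequality. Taking the expectation over $x\sim D$ yields $H_{\pi_{\mathrm{QEMPO\text{-}KL}}}(Y|X)\ge H_{\pi_{\mathrm{RLHF}}}(Y|X)$.

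The main obstacle is justifying that the multipliers satisfy $\lambda_2\ge 0$, which makes $\alpha\in[0,1)$. I will take this from the KKT conditions behind \cref{prop:kl-qempo}. The inequality constraint $\mathrm{KL}\le \mathrm{K}$ has a nonnegative multiplier, and with the paper's sign convention this multiplier appears as $\lambda_2$ in the exponent $\lambda_2/(\lambda_2+1)$. A sanity check is available: $\pi_{\mathrm{RLHF}}$ satisfies both constraints with equality by the definitions of $\mathrm{R}$ and $\mathrm{K}$. The problem is also convex, since entropy is concave, the reward constraint is linear, and KL is convex. So the KKT point is a global entropy maximizer over the feasible set and must dominate the feasible $\pi_{\mathrm{RLHF}}$. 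This gives an independent confirmation of the inequality. I will present the tempering computation as the main proof because it is where the stated condition $\frac{1}{\beta}=\frac{\lambda_1}{\lambda_2}$ is actually used.
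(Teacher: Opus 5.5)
Your tempering argument is the paper's proof. The paper rewrites $\pi_{\mathrm{QEMPO-KL}}$ as $\exp\bigl(\ln\pi_{\mathrm{ref}}+\frac{\lambda_1}{\lambda_2}r\bigr)^{\lambda_2/(\lambda_2+1)}/Z(x)$ and applies Lemma~\ref{lemma1} with exponent $\frac{\lambda_2}{\lambda_2+1}<1$ against exponent $1$. That lemma's identity $H'(s)=-s\,\mathrm{Var}_{p(s)}[z]$ is your $-t\,\mathrm{Var}_{q_t}(\log p)$, since the paper's $z$ and your $\log p$ differ only by an additive constant.

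Your feasibility remark is a different route that does not use the hypothesis, and it proves more than you say. Since $\pi_{\mathrm{RLHF}}$ uniquely minimizes $\mathrm{KL}(\pi\|\pi_{\mathrm{ref}})-\frac{1}{\beta}\mathbb{E}_{\pi}[r]$, the two constraints with these $\mathrm{R},\mathrm{K}$ are met only by $\pi_{\mathrm{RLHF}}$ itself. So the true constrained maximizer gives equality. It also follows that the tempered form $\propto\pi_{\mathrm{RLHF}}^{\alpha}$ with $\alpha<1$, which both you and the paper analyze, is infeasible unless $\pi_{\mathrm{RLHF}}$ is uniform on its support.
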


\subsection{Quality-constrained Entropy Maximization}

By replacing the optimization objective in \cref{prop:kl-rlhf} with maximizing policy entropy while treating KL divergence and quality as constraints, we obtain $\pi_{\mathrm{QEMPO-KL}}(y|x)$, which possesses greater entropy than $\pi_{\mathrm{RLHF}}(y|x)$. However, from a constraint perspective, if we remove the KL constraint in \cref{prop:kl-qempo}, the policy's solution space expands, theoretically implying that the resulting policy should exhibit even greater entropy than $\pi_{\mathrm{QEMPO-KL}}(y|x)$. Moreover, \cref{prop:one} indicates that quality and diversity are the essential components in alignment tasks. Therefore, we further remove the KL constraint in \cref{prop:kl-qempo}. \cref{prop:qempo} provides the policy $\pi_{\mathrm{QEMPO}}(y|x)$ under the corresponding optimization objective. 

It can be observed that compared to $\pi_{\mathrm{QEMPO-KL}}(y|x)$, $\pi_{\mathrm{QEMPO}}(y|x)$ removes $\pi_{\mathrm{ref}}(y|x)$. \textcolor{black}{Whether removing $\pi_{\mathrm{ref}}(y|x)$ brings benefits often depends on the quality of $\pi_{\mathrm{ref}}(y|x)$ itself. If the performance of $\pi_{\mathrm{ref}}(y|x)$ is sufficiently high, this implies that the influence of $\pi_{\mathrm{ref}}(y|x)$ on $\pi_{\mathrm{QEMPO-KL}}(y|x)$ can be positive.} However, when the performance of $\pi_{\mathrm{ref}}(y|x)$ is inadequate—for instance, when a sample is not the desired one but $\pi_{\mathrm{ref}}(y|x)$ still assigns a high probability—the negative impact on $\pi_{\mathrm{QEMPO-KL}}(y|x)$ can be relatively significant. Nevertheless, from a practical perspective, removing $\pi_{\mathrm{ref}}(y|x)$ reduces memory usage, which is beneficial for the training process. The proofs of \cref{prop:qempo} and \cref{prop:entropy-kl-2} are provided in Appendix \ref{p5} and Appendix \ref{p6}, respectively.

\begin{restatable}{proposition}{propFive}
\label{prop:qempo} 
For the optimization problem: 
\begin{align}
\max_{\pi} H_{\pi}(Y|X) \quad s.t. \quad\mathbb{E}_{\pi(y|x)}[r(x,y)] \ge \mathrm{R}
\nonumber
\end{align}
where $\mathrm{R} = \mathbb{E}{\pi^{}_{\mathrm{RLHF}}}[r(x,y)]$. The analytical solution that minimizes this optimization objective is 
$ \pi_{\mathrm{QEMPO}}(y|x)
  =  \frac{ \exp( \lambda r(x,y)) }{Z(x)}$ where $Z(x)= \sum_{y}  \exp( \lambda r(x,y))$ and \textcolor{black}{$\lambda$ is the Lagrange multiplier for this optimization objective}.
\end{restatable}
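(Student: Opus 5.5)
The plan is to treat the problem pointwise in $x$ as a finite-dimensional convex program over the probability simplex $\{\pi(\cdot|x): \pi(y|x)\ge 0,\ \sum_y \pi(y|x)=1\}$ and solve it with Lagrange multipliers / KKT conditions, mirroring the argument for \cref{prop:kl-rlhf}. First we observe that $H_\pi(Y|X)=\mathbb{E}_x[-\sum_y \pi(y|x)\log\pi(y|x)]$ and the constraint $\mathbb{E}_{\pi(y|x)}[r(x,y)]\ge \mathrm{R}$ are imposed per prompt. Hence maximizing the conditional entropy decouples into independent problems, one for each $x$. For fixed $x$ the objective $-\sum_y \pi\log\pi$ is strictly concave, the reward constraint is linear, and the normalization is affine. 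So the problem is a concave maximization over a convex set, and KKT conditions are sufficient for global optimality.

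Next we form the Lagrangian
\begin{align}
\mathcal{L}(\pi,\lambda,\mu) = -\sum_y \pi(y|x)\log\pi(y|x) + \lambda\Big(\sum_y \pi(y|x) r(x,y) - \mathrm{R}\Big) + \mu\Big(1-\sum_y\pi(y|x)\Big), \nonumber
\end{align}
with $\lambda\ge 0$. Setting $\partial\mathcal{L}/\partial\pi(y|x) = -\log\pi(y|x) - 1 + \lambda r(x,y) - \mu = 0$ gives $\pi(y|x)\propto \exp(\lambda r(x,y))$. Because this is strictly positive, the nonnegativity constraints are inactive and can be dropped. Normalization then fixes $e^{1+\mu}=Z(x)=\sum_y\exp(\lambda r(x,y))$, which yields the stated form of $\pi_{\mathrm{QEMPO}}$. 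Equivalently, one can reproduce the paper's KL-nonnegativity style. For any feasible $\pi$,
\begin{align}
H_\pi \le H_\pi + \lambda(\mathbb{E}_\pi r - \mathrm{R}) = -\mathrm{KL}(\pi\,\|\,\pi_{\mathrm{QEMPO}}) + \log Z(x) - \lambda \mathrm{R}, \nonumber
\end{align}
so the bound $\log Z(x)-\lambda\mathrm{R}$ is attained exactly when $\pi=\pi_{\mathrm{QEMPO}}$ and the constraint is tight. I would present this second derivation, since it proves global optimality directly without appealing to KKT sufficiency.

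The main obstacle is justifying the choice of $\lambda$, namely that some $\lambda\ge0$ makes the reward constraint hold with equality, as complementary slackness requires. Here $g(\lambda)=\mathbb{E}_{\pi_\lambda}[r]$ is continuous and nondecreasing, since $g'(\lambda)=\mathrm{Var}_{\pi_\lambda}(r)\ge 0$. Moreover, $g(0)$ is the mean reward under the uniform distribution, and $g(\lambda)\to\max_y r(x,y)$ as $\lambda\to\infty$.

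For existence of $\lambda$ we split into cases. If $\mathrm{R}\le g(0)$, then $\lambda=0$ gives the uniform policy, which is the unconstrained entropy maximizer and still of the stated form. Otherwise $\mathrm{R}=\mathbb{E}_{\pi_{\mathrm{RLHF}}}[r]<\max_y r$ whenever $\pi_{\mathrm{RLHF}}$ has full support and $r$ is nonconstant. The intermediate value theorem then gives a $\lambda>0$ with $g(\lambda)=\mathrm{R}$. Uniqueness of the optimizer follows from strict concavity of the entropy.
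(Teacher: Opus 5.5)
Your proposal is correct. Your first derivation is essentially the paper's, but the argument you say you would present is a different and stronger route.

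The paper forms the Lagrangian with only the reward multiplier and asserts $\lambda>0$. It sets the derivative to zero to get $\pi(y|x)=\exp(\lambda r(x,y)-1)$ and then renormalizes after the fact, so the simplex constraint never enters the Lagrangian. Optimality is justified only by ``convex, so the gradient vanishes at the minimum.'' Nothing in the paper shows that a multiplier exists for which the candidate is feasible and satisfies complementary slackness.

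Your certificate $H_\pi \le H_\pi+\lambda(\mathbb{E}_\pi r-\mathrm{R}) = \log Z(x)-\lambda\mathrm{R}-\mathrm{KL}(\pi\,\|\,\pi_{\mathrm{QEMPO}})$ gives global optimality and uniqueness in one step, without appealing to KKT sufficiency. Your monotonicity argument via $g'(\lambda)=\mathrm{Var}_{\pi_\lambda}(r)\ge 0$ then supplies the missing existence of $\lambda$. This is closely related to the variance computation in the paper's own entropy-monotonicity lemma.

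Your case analysis also exposes two things the paper glosses over:
\begin{itemize}
\item If $\mathrm{R}\le g(0)$, the multiplier is $\lambda=0$ (the uniform policy), not a strictly positive value as the paper's proof asserts.
\item If $\mathrm{R}=\max_y r(x,y)$ with $r$ nonconstant, the optimizer is uniform on the argmax set. That distribution is not of the stated form for any finite $\lambda$. This case can occur when $\pi_{\mathrm{ref}}$, and hence $\pi_{\mathrm{RLHF}}$, lacks full support. So the full-support hypothesis you add is genuinely needed for the proposition, not just a convenience.
\end{itemize}

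One wording fix: the bound is attained iff $\pi=\pi_{\mathrm{QEMPO}}$ and $\lambda(\mathbb{E}_\pi r-\mathrm{R})=0$. In your $\lambda=0$ case this does not require the constraint to be tight.
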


\begin{restatable}{proposition}{propSix}
\label{prop:entropy-kl-2} 
When \(\lambda \leq \frac{\lambda_1}{\lambda_2 + 1}\) and \(\frac{\lambda_2}{\lambda_1} \lambda\) is sufficiently small, it satisfies: \(H_{\pi_{\mathrm{QEMPO}}}(Y|X) \geq H_{\pi_{\mathrm{QEMPO-KL}}}(Y|X)\).
\end{restatable}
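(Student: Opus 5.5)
The plan is to view both policies as members of one two-parameter exponential family and to compare their entropies along a path inside that family. Write $a = \frac{\lambda_2}{\lambda_2+1}$ and $b = \frac{\lambda_1}{\lambda_2+1}$, so that $\pi_{\mathrm{QEMPO-KL}} \propto \exp(b\, r + a \log \pi_{\mathrm{ref}})$ by \cref{prop:kl-qempo}, and $\pi_{\mathrm{QEMPO}} \propto \exp(\lambda r)$ by \cref{prop:qempo}. For a fixed prompt $x$ define
\begin{align}
\pi_{\theta,t}(y|x) = \frac{\exp(\theta r(x,y) + t \log \pi_{\mathrm{ref}}(y|x))}{Z_{\theta,t}(x)}, \qquad H(\theta,t) = \log Z_{\theta,t} - \mathbb{E}_{\pi_{\theta,t}}[\theta r + t\log\pi_{\mathrm{ref}}]. \nonumber
\end{align}
Then $\pi_{\mathrm{QEMPO}} = \pi_{\lambda,0}$ and $\pi_{\mathrm{QEMPO-KL}} = \pi_{b,a}$. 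I will go from $(\lambda,0)$ to $(b,a)$ in two legs: first $(\lambda,0)\to(b,0)$, then $(b,0)\to(b,a)$. The result for $H_\pi(Y|X)$ then follows by averaging over $x$.

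\textbf{First leg (reward temperature).} Using the standard cumulant identities $\partial_\theta \log Z = \mathbb{E}[r]$ and $\partial_\theta \mathbb{E}[r] = \mathrm{Var}(r)$, one gets
\begin{align}
\partial_\theta H(\theta,0) = -\theta\,\mathrm{Var}_{\pi_{\theta,0}}(r) \le 0 \quad \text{for } \theta \ge 0. \nonumber
\end{align}
Hence $H(\theta,0)$ is nonincreasing in $\theta$. The hypothesis $\lambda \le \frac{\lambda_1}{\lambda_2+1} = b$, together with nonnegative multipliers, gives $H(\lambda,0) \ge H(b,0)$.

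\textbf{Second leg (reference exponent).} Along $t \in [0,a]$ the same computation gives
\begin{align}
\partial_t H(b,t) = -b\,\mathrm{Cov}_{\pi_{b,t}}(r,\log\pi_{\mathrm{ref}}) - t\,\mathrm{Var}_{\pi_{b,t}}(\log \pi_{\mathrm{ref}}). \nonumber
\end{align}
The second term is always $\le 0$. If the covariance term is nonnegative, $H(b,a) \le H(b,0)$ follows immediately. That covariance is nonnegative when $\pi_{\mathrm{ref}}$ tends to favour higher-reward outputs, which is the regime the paper discusses around \cref{prop:qempo}.

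\textbf{Main obstacle and how I would use the smallness hypothesis.} Without any assumption on $\pi_{\mathrm{ref}}$ the covariance can have either sign, so the second leg is where the argument can fail. My first attempt is to keep a quantitative surplus from the first leg: by the mean value theorem,
\begin{align}
H(\lambda,0) - H(b,0) = \int_\lambda^{b} \theta\,\mathrm{Var}(r)\,d\theta \ge 0. \nonumber
\end{align}
I would then bound the possible entropy gain on the second leg by $b\int_0^a |\mathrm{Cov}(r,\log \pi_{\mathrm{ref}})|\,dt$, using Cauchy--Schwarz to control the covariance by $\sqrt{\mathrm{Var}(r)\,\mathrm{Var}(\log\pi_{\mathrm{ref}})}$. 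I would rescale to the parametrization $\pi \propto \exp\bigl(b(r + \tfrac{\lambda_2}{\lambda_1}\log\pi_{\mathrm{ref}})\bigr)$, in which the weight on $\log\pi_{\mathrm{ref}}$ relative to the reward is $\frac{\lambda_2}{\lambda_1}$. The condition that $\frac{\lambda_2}{\lambda_1}\lambda$ is sufficiently small should then make this perturbation term of lower order, dominated by the first-leg surplus and by the $-t\,\mathrm{Var}(\log\pi_{\mathrm{ref}})$ term. This yields $H(\lambda,0) \ge H(b,a)$, i.e.\ $H_{\pi_{\mathrm{QEMPO}}} \ge H_{\pi_{\mathrm{QEMPO-KL}}}$.

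I expect making ``sufficiently small'' precise to be delicate. If the bound does not close, the fallback is to state the result with the explicit side condition $\mathrm{Cov}(r,\log\pi_{\mathrm{ref}}) \ge 0$ along the path.
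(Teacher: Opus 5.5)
The gap is your second leg, and the smallness hypothesis cannot close it the way you sketch. On $t\in[0,a]$ the only sign-definite term is $-t\,\mathrm{Var}(\log\pi_{\mathrm{ref}})$. The hypothesis does not make the covariance contribution lower order, for two reasons. First, the leg has length $a=\frac{\lambda_2}{\lambda_2+1}=\frac{\lambda_2}{\lambda_1}b$, but the hypothesis only controls $\frac{\lambda_2}{\lambda_1}\lambda$, so when $\lambda\ll b$ it gives no control on $a$. Second, in the opposite case $\lambda=b=\frac{\lambda_1}{\lambda_2+1}$, which the hypothesis allows, the leg is short but your first-leg surplus $\int_\lambda^b\theta\,\mathrm{Var}(r)\,d\theta$ vanishes. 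Then nothing absorbs a first-order gain $-ab\,\mathrm{Cov}(r,\log\pi_{\mathrm{ref}})>0$. This is a real obstruction, not a weak estimate. Take two outputs with $r=(1,0)$, $\pi_{\mathrm{ref}}=(0.01,0.99)$, $\lambda=b=1$ and $a=\frac{\lambda_2}{\lambda_1}\lambda=0.01$. Then $H_{\pi_{\mathrm{QEMPO}}}\approx0.582<0.591\approx H_{\pi_{\mathrm{QEMPO-KL}}}$, and the violation persists as $a\to0$. So, with the multipliers treated as free parameters subject only to the stated conditions, the statement itself fails, and no estimate can close your second leg. Your fallback (assume $\mathrm{Cov}(r,\log\pi_{\mathrm{ref}})\ge0$ along the path) correctly proves a modified statement, and it no longer needs the smallness condition.

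The paper sidesteps the covariance by choosing a different path. You already wrote the relevant combined logit $r+\frac{\lambda_2}{\lambda_1}\log\pi_{\mathrm{ref}}$, which is proportional to $z=\log\pi_{\mathrm{ref}}+\frac{\lambda_1}{\lambda_2}r$. With this $z$, $\pi_{\mathrm{QEMPO-KL}}=\mathrm{softmax}(s_2z)$ for $s_2=\frac{\lambda_2}{\lambda_2+1}$, and $\pi_{\mathrm{QEMPO}}=\mathrm{softmax}(s_1\frac{\lambda_1}{\lambda_2}r)$ for $s_1=\frac{\lambda_2}{\lambda_1}\lambda$; the condition $\lambda\le\frac{\lambda_1}{\lambda_2+1}$ is exactly $s_1\le s_2$. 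The paper uses smallness of $s_1$ to replace $\pi_{\mathrm{QEMPO}}$ by $\mathrm{softmax}(s_1z)$, dropping the factor $\pi_{\mathrm{ref}}^{s_1}\to1$. It then applies Lemma~\ref{lemma1} to the single vector $z$. In your coordinates this is the path $(\lambda,0)\to(\lambda,s_1)\to(b,a)$. Its second piece is a ray through the origin, along which $\frac{d}{ds}H=-s\,\mathrm{Var}(z)\le0$. The cross term you could not sign is absorbed into the variance of the combined logit, and the only non-monotone piece has length $s_1$, exactly the quantity assumed small. Rescaling the whole logit, rather than moving $\theta$ and $t$ separately, is what your decomposition is missing. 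Note, though, that the paper never quantifies the step $\pi_{\mathrm{ref}}^{s_1}\to1$, so that step is heuristic. In the example above, $s_1=s_2$, and that step reverses the inequality at first order in $s_1$. A rigorous route exists if $\lambda,\lambda_1,\lambda_2$ are the multipliers actually determined by the two constrained problems with the same $\mathrm{R}$. Then the QEMPO-KL feasible set is contained in the QEMPO feasible set, so $H_{\pi_{\mathrm{QEMPO}}}\ge H_{\pi_{\mathrm{QEMPO-KL}}}$ holds with no extra conditions.
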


\subsection{Optimization of QEMPO}
In \cref{tab:policy}, we present the forms of the optimal policy solutions under RLHF, QEMPO, and QEMPO-KL. Based on the forms of the optimal policies, we can transform the reward function \( r(x, y) \) and derive the corresponding expressions. To optimize QEMPO and QEMPO-KL, we use offline and online modes respectively for optimization.

\textbf{Offline mode.} Similarly to DPO, we can substitute the reward expressions corresponding to QEMPO and QEMPO-KL into the BT model to obtain their offline optimization objectives. For QEMPO-KL, the optimization objective is:
\begin{align}
    \max_{\pi} \mathbb{E}_{y^{+} \succ  y^{-}\sim D}[\log \sigma ( \frac{1}{\lambda_{1}}  \ln \pi(y^{+}|x)+ \frac{\lambda_{2}}{\lambda_{1}} \ln \frac{\pi(y^{+}|x)}{\pi_{\mathrm{ref}}(y^{+}|x)}- \frac{1}{\lambda_{1}}  \ln \pi(y^{-}|x)- \frac{\lambda_{2}}{\lambda_{1}} \ln \frac{\pi(y^{-}|x)}{\pi_{\mathrm{ref}}(y^{-}|x)})].
\end{align}
For QEMPO, the optimization objective is:
\begin{align}
    \max_{\pi} \mathbb{E}_{y^{+} \succ  y^{-}\sim D}[\log \sigma (\frac{1}{\lambda} \ln \pi(y^{+}|x)-\frac{1}{\lambda} \ln \pi(y^{-}|x))].
\end{align}

\textbf{Online mode.} The gradient-weighting method introduced by \citep{gvpo} incorporates reward expressions from optimal policy solutions. By substituting the reward expressions of the QEMPO-KL and QEMPO optimal solutions into this method, we can derive the optimization objectives for QEMPO-KL and QEMPO. For QEMPO-KL, the optimization objective is:
\begin{align}
    2(\frac{\lambda_{2}+1}{\lambda_{1}}) \mathbb{E}_{x,y}[(r(x,y)-\mathbb{E}[r(x,y)])\log \pi+ \frac{\lambda_{2}}{\lambda_{1}} \mathrm{Cov}(\log \pi, \log \pi_{\mathrm{ref}})-0.5(\frac{\lambda_{2}+1}{\lambda_{1}}) \mathrm{Var}(\log \pi)]
\end{align}
where \( r(x, y) \) is the reward function used during training, \( \operatorname{Cov}(\log \pi, \log \pi_{\mathrm{ref}}) \) denotes the covariance between \( \log \pi(y|x) \) and \( \log \pi_{\mathrm{ref}}(y|x) \), and \( \operatorname{Var}(\log \pi) \) is the variance of \( \log \pi(y|x) \). For QEMPO, the optimization objective is: \begin{align}
    2\frac{1}{\lambda} \mathbb{E}_{x,y}[(r(x,y)-\mathbb{E}[r(x,y)])\log \pi(y|x)-0.5\frac{1}{\lambda}  \mathrm{Var}(\log \pi(y|x))]
\end{align}

For both QEMPO and QEMPO-KL in online mode, to ensure quality-constrained policy optimization while maximizing entropy, we optimize entropy only when the policy produces correct responses to questions. Consequently, when policy outputs contain errors, we remove the variance term for such data points in the optimization objective to prevent entropy increase under low-quality outputs.

\begin{table}[h]
\caption{The closed-form solution of the optimal policy for RLHF, QEMPO and QEMPO-KL, and the re-expression of the reward function under the optimal policy.}
\label{tab:policy}
\begin{tabular}{l|l|l}
\hline
 & \multicolumn{1}{c|}{Optimal policy} & \multicolumn{1}{c}{Reward expression} \\ \hline
\multirow{2}{*}{RLHF} & \multirow{2}{*}{$\pi (y|x)= \frac{\pi_{\mathrm{ref}}(y|x) \exp( \frac{1}{\beta}r(x,y))}{Z(x)} $} & \multirow{2}{*}{$r_{\theta}(x,y) = \beta \ln \frac{\pi(y|x)}{\pi_{\mathrm{ref}}(y|x)}+ \beta \ln Z(x)$} \\
 &  &  \\ \hline
\multicolumn{1}{c|}{\multirow{4}{*}{QEMPO-KL}} & \multirow{4}{*}{$\pi (y|x)=\frac{\pi_{\mathrm{ref}}(y|x)^{\frac{\lambda_{2}}{\lambda_{2}+1}} \exp( \frac{\lambda_{1}}{\lambda_{2}+1} r(x,y)) }{Z(x)} $} & \multirow{4}{*}{\makecell{$r_{\theta}(x,y) =  \frac{1}{\lambda_{1}}  \ln \pi(y|x)+ $ \\$\frac{\lambda_{2}}{\lambda_{1}} \ln \frac{\pi(y|x)}{\pi_{\mathrm{ref}}(y|x)} +  \frac{\lambda_{2}+1}{\lambda_{1}}  \ln Z(x)$}} \\
\multicolumn{1}{c|}{} &  &  \\
\multicolumn{1}{c|}{} &  &  \\
\multicolumn{1}{c|}{} &  &  \\ \hline
\multirow{2}{*}{QEMPO} & \multirow{2}{*}{$\pi(y|x) =  \frac{ \exp( \lambda r(x,y))}{Z(x)}  $} & \multirow{2}{*}{$r_{\theta}(x,y)  =  \frac{1}{\lambda} \ln \pi(y|x)+ \frac{1}{\lambda} \ln Z(x)$} \\
 &  &  \\ \hline
\end{tabular}
\end{table}

\section{Experiments}
\label{sec:setup}

\subsection{Experiments in Offline Mode}
\subsubsection{Setup}
%To validate the effectiveness of QEMPO,
\textbf{Base model and training data.}  For smaller models, we used Qwen2.5-1.5B-Instruct \citep{qwen2.5} and Llama-3.2-1B-Instruct \citep{llama3.2}. For larger models, we used Qwen2.5-7B-Instruct \citep{qwen2.5} and Llama-3.1-8B-Instruct \citep{dubey2024llama}. We use UltraFeedback Binarized dataset \citep{ultrafeedback} for model training. 

\textbf{Implementation details in offline mode.}
For all experiments, we used a batch size of 128 and trained for 1 epoch. We use Adam \citep{adam} as the optimizer with a warmup ratio of 1. For all experiments involving Llama-3.2-1B-Instruct and Qwen2.5-1.5B-Instruct, the learning rate is set to 5.0e-7. For all experiments involving Llama-3.1-8B-Instruct and Qwen2.5-7B-Instruct, the learning rate is set to 1.0e-7. For RLHF, we follow the setup in \citep{zephyr} and set $\beta$ to 1e-2. For QEMPO, we set $\frac{1}{\lambda}$ to 4e-3. For QEMPO-KL, {we set $\frac{1}{\lambda_{1}}$ to  4e-3  and  set $\frac{\lambda_{2}}{\lambda_{1}}$ to 1e-2.}  We use the test set loss of the trained model on UltraFeedback Binarized as the criterion for selecting the best model.

\textbf{Evaluation metrics.}
For the quality of LLM outputs, we use gpt-4o-2024-08-06 to evaluate and score it. For the diversity of LLM outputs, we use the diversity evaluation framework proposed by \citep{metric} to assess it. 
% \citep{metric} evaluates the diversity of outputs from lexical, syntactic, and semantic dimensions. 
%The lexical  diversity is calculated as the ratio of unique n-grams to the total number of n-grams. To calculate syntactic diversity, \citep{metric} uses a neural parser to generate dependency trees from sentences, then the Weisfeiler-Lehman graph kernel \citep{grakel} is used to compute the average similarity among the dependency trees. Finally, subtracting the computed average similarity from 1 yields the final syntactic diversity. For semantic diversity, \citep{metric} uses Sentence-BERT \citep{reimers2019sentence} to convert the output into embeddings and calculates the average similarity among the embeddings. Finally, subtracting the computed average similarity from 1 yields the final semantic diversity. Higher values of lexical diversity, syntactic diversity, and semantic diversity indicate higher diversity.

\begin{table}[]
\caption{
Evaluation results of output diversity and quality on MT-Bench. Diversity is measured by first sampling each first-round query 16 times, then using randomly selected first-round outputs as context for a second round of 16 samples, and computing overall diversity (higher is more diverse). Quality is scored by gpt-4o-2024-08-06. Values after ± denote standard deviation across 5 runs.}
\label{tab:diversity}
\centering
\begin{tabular}{
>{\columncolor[HTML]{FFFFFF}}l 
>{\columncolor[HTML]{FFFFFF}}c 
>{\columncolor[HTML]{FFFFFF}}c 
>{\columncolor[HTML]{FFFFFF}}c 
>{\columncolor[HTML]{FFFFFF}}c
>{\columncolor[HTML]{FFFFFF}}c
}
\hline
 & Lexical & Semantic & Syntactic & Avg & MT-Bench \\ \hline
\em Qwen2.5-1.5B-Instruct & 41.27 ± 0.15 & 29.20 ± 0.11 & 53.39 ± 0.18 & 41.29 & 6.00 ± 0.06\\
$\quad \vdash$ SPL  & 43.53 ± 0.14 & 30.09 ± 0.09 & 54.52 ± 0.20 & 42.71  & 6.29 ± 0.06\\
$\quad \vdash$ RLHF & 40.42 ± 0.16 & 28.74 ± 0.10 & 51.92 ± 0.18 & 40.36 & 6.30 ± 0.07\\
$\quad \vdash$ QEMPO & \textbf{44.23} ± 0.15 & \textbf{30.18} ± 0.10 & \textbf{54.29} ± 0.19 & \textbf{42.90} & \textbf{6.54} ± 0.05\\
$\quad \llcorner$ QEMPO-KL & 44.18 ± 0.16 & 30.08 ± 0.11 & 53.30 ± 0.18 & 42.52 & 6.43 ± 0.04 \\ \hline

\em Llama-3.2-1B-Instruct & 40.78 ± 0.16 & 29.54 ± 0.13 & 53.55 ± 0.21 & 41.29 & 5.32 ± 0.03\\
$\quad \vdash$ SPL& 47.88 ± 0.15 & 30.83 ± 0.15 & 52.18 ± 0.17 & 43.63 & 5.15 ± 0.04\\
$\quad \vdash$ RLHF & 43.85 ± 0.16 & 29.08 ± 0.10 & 52.43 ± 0.18 & 41.79 & 5.68 ± 0.03\\
$\quad \vdash$ QEMPO & 46.51 ± 0.13 & \textbf{30.84} ± 0.11 & \textbf{53.95} ± 0.19 & 43.77 & \textbf{5.82} ± 0.03\\
$\quad \llcorner$ QEMPO-KL & \textbf{48.36} ± 0.16 & 30.55 ± 0.11 & 53.38 ± 0.18 & \textbf{44.10} & 5.55 ± 0.04 \\ \hline

\em Qwen2.5-7B-Instruct & 31.68 ± 0.14 & 28.04 ± 0.09 & 52.28 ± 0.19 & 37.33 & 7.78 ± 0.08\\
$\quad \vdash$ SPL & \textbf{36.18} ± 0.14 & \textbf{30.31} ± 0.12 & \textbf{55.77} ± 0.16 & \textbf{40.75} & 7.82 ± 0.07\\
$\quad \vdash$ RLHF & 31.98 ± 0.15 & 27.41 ± 0.13 & 51.48 ± 0.17 & 36.95 & 7.90 ± 0.06\\
$\quad \vdash$ QEMPO & 35.37 ± 0.16 & 28.34 ± 0.09 & 52.45 ± 0.19 & 38.72 & 7.92 ± 0.07\\
$\quad \llcorner$ QEMPO-KL & 34.60 ± 0.15 & 28.08 ± 0.12 & 52.34 ± 0.18 & 38.34 & \textbf{7.96} ± 0.06 \\ \hline

\em Llama-3.1-8B-Instruct & 34.85 ± 0.17 & 29.55 ± 0.13 & 54.12 ± 0.15 & 39.51 & 7.67 ± 0.05\\
$\quad \vdash$ SPL & 31.81 ± 0.14 & 27.41 ± 0.11 & 52.34 ± 0.15 & 37.19 & 6.71 ± 0.03\\
$\quad \vdash$RLHF & 34.50 ± 0.15 & 27.73 ± 0.13 & 51.85 ± 0.17 & 38.03 & 7.80 ± 0.04\\
$\quad \vdash$ QEMPO & \textbf{35.60} ± 0.14 & 28.70 ± 0.13 & 53.42 ± 0.16 & 39.24 & \textbf{7.84} ± 0.05\\
$\quad \llcorner$ QEMPO-KL & 33.63 ± 0.15 & \textbf{30.68} ± 0.10 & \textbf{54.91} ± 0.15 & \textbf{39.74} & 7.31 ± 0.04 \\ \hline
\end{tabular}
\end{table}

\begin{figure}[h]
    \centering
    \includegraphics[height=0.45\textwidth]{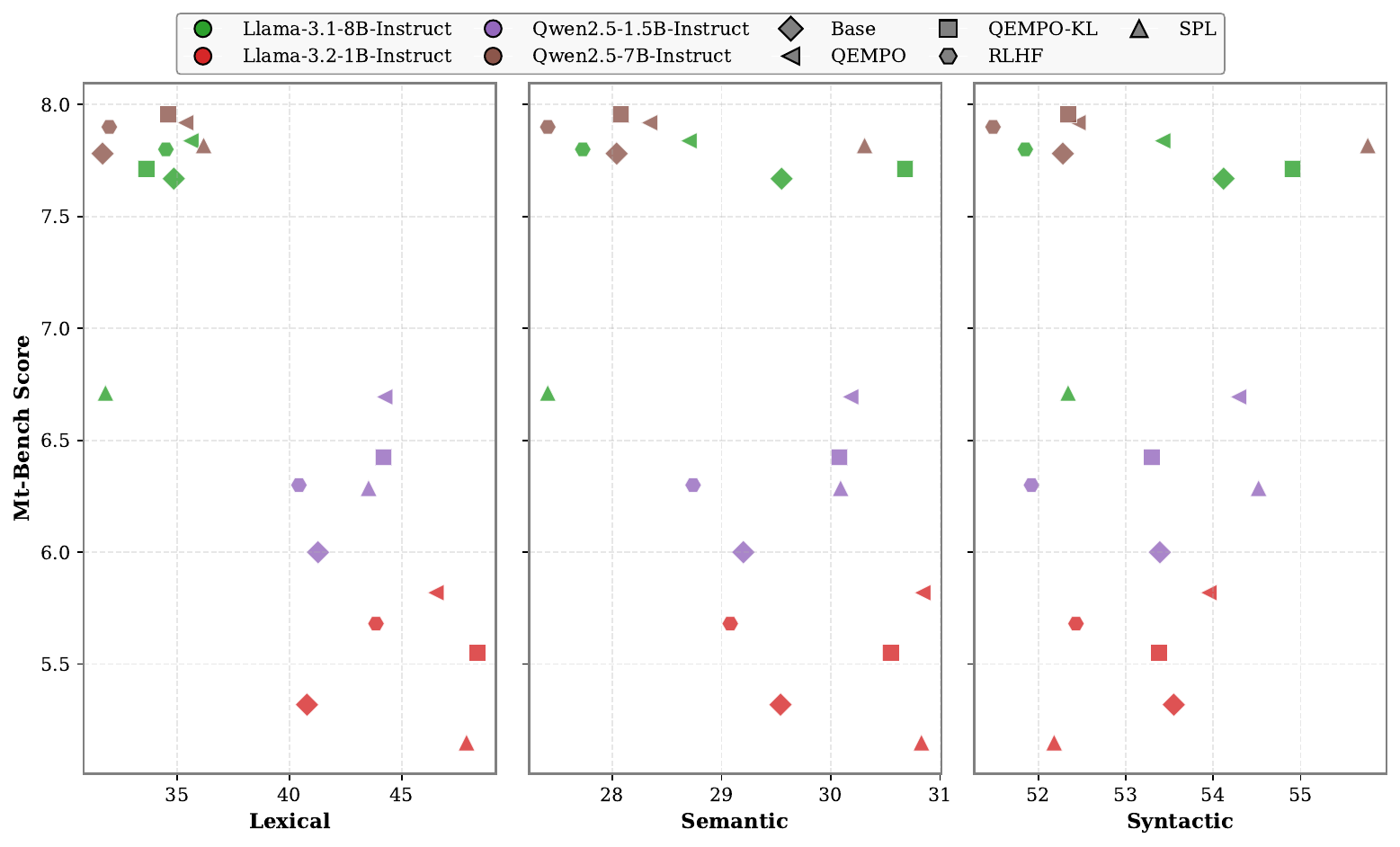}
    \caption{Diversity and quality across different models and methods. Distinct colors represent different base models, while varying shapes denote different approaches.}
    \label{fig:div-qua}
\end{figure}
% As shown in \cref{fig:div-qua}, smaller models tend to exhibit higher diversity but often at the cost of lower quality.
\subsection{Performance}
We evaluate the effectiveness of the proposed methods on MT-Bench, with SPL~\citep{slocum2025diverse} serving as a baseline.
The evaluation results are presented in \cref{tab:diversity}. Overall, RLHF tends to reduce output diversity across most scenarios, consistent with the findings of~\citep{xu2025understanding}. In contrast, both QEMPO and QEMPO-KL consistently improve diversity over RLHF in all experimental settings, demonstrating the effectiveness of our proposed approaches.

SPL also achieves higher diversity than RLHF in most cases, corroborating the findings of~\citep{slocum2025diverse}. However, its performance is highly unstable across different base models. On Llama-3.1-8B-Instruct, SPL yields the lowest diversity among all compared methods, while on both Llama-3.2-1B-Instruct and Llama-3.1-8B-Instruct, its output quality drops below that of the base model, indicating that SPL's diversity gains often come at the cost of quality degradation. We attribute this instability to SPL's decoupled treatment of entropy and cross-entropy, which lacks the joint constraint provided by the KL divergence in RLHF and makes the optimization sensitive to model-specific characteristics such as scale and prior output distribution.

In contrast, QEMPO consistently outperforms RLHF in both diversity and quality across all experimental settings, benefiting from its principled quality-constrained entropy maximization framework. QEMPO-KL yields mixed results: it performs on par with or better than RLHF on the Qwen model series, but slightly underperforms on the Llama series. As \cref{fig:div-qua} reveals, the Llama series exhibits inherently greater diversity than the Qwen series at comparable model sizes, which suggests that QEMPO-KL provides more substantial gains for models with initially lower diversity. Additionally, as shown in \cref{tab:mmlu}, both QEMPO and QEMPO-KL maintain or slightly improve MMLU \citep{mmlu}  scores over their respective base models across all four architectures, confirming that the proposed methods do not negatively affect general model utility.
\subsubsection{Ablation Study}
\begin{table}[]
\caption{Quality and diversity of Qwen2.5-1.5B-Instruct when selecting different hyperparameters on QEMPO and QEMPO-KL. Values after ± denote standard deviation across 5 runs.}
\label{tab:ablation}
\centering
\begin{tabular}{
>{\columncolor[HTML]{FFFFFF}}l 
>{\columncolor[HTML]{FFFFFF}}c 
>{\columncolor[HTML]{FFFFFF}}c 
>{\columncolor[HTML]{FFFFFF}}c cc}
\hline
 & Lexical & Semantic & Syntactic & Average & MT-Bench \\ \hline
 \em QEMPO
 \\
($\frac{1}{\lambda}$=1e-2) & \textbf{45.30 ± 0.13} & \textbf{30.98 ± 0.11} & \textbf{54.46 ± 0.17} & \textbf{43.58} & 6.31± 0.06 \\
($\frac{1}{\lambda}$=6e-3) & 44.78 ± 0.18 & 30.48 ± 0.14 & 54.44 ± 0.11 & 43.23 & 6.34 ± 0.05 \\
 ($\frac{1}{\lambda}$=4e-3) & {44.23} ± 0.15 & {30.18} ± 0.10 & {54.29} ± 0.19 & {42.90} & \textbf{6.54} ± 0.05\\
($\frac{1}{\lambda}$=2e-3) & 42.45 ± 0.16 & 29.25 ± 0.08 & 53.43 ± 0.14 & 41.71 & 6.48 ± 0.05 \\
($\frac{1}{\lambda}$=1e-3) & 42.51 ± 0.11 & 29.74 ± 0.13 & 53.11± 0.12 & 41.79 & 6.30 ± 0.06\\ \hline
 \em QEMPO-KL \\
($\frac{1}{\lambda1}$=4e-3, $\frac{\lambda_{2}}{\lambda_{1}}$=1e-2) & 44.18 ± 0.16 & 30.08 ± 0.11 & 53.30 ± 0.18 & 42.52 & 6.43 ± 0.04 \\ 
($\frac{1}{\lambda1}$=2e-3, $\frac{\lambda_{2}}{\lambda_{1}}$=1e-2) & 43.20 ± 0.12 & 29.45  ± 0.15 & 52.93 ± 0.13& 41.86 & 6.35 ± 0.06 \\
($\frac{1}{\lambda1}$=4e-3, $\frac{\lambda_{2}}{\lambda_{1}}$=6e-3) & 43.95 ± 0.15 & 30.15 ± 0.14 & 53.26 ± 0.11 & 42.45 & \textbf{6.36 ± 0.05}\\
($\frac{1}{\lambda1}$=4e-3, $\frac{\lambda_{2}}{\lambda_{1}}$=1.2e-2) & \textbf{44.90  ± 0.11} & 29.90± 0.13 & 52.69 ± 0.13 & 42.50 & 6.28± 0.04 \\
($\frac{1}{\lambda1}$=6e-3, $\frac{\lambda_{2}}{\lambda_{1}}$=1e-2) & 44.85± 0.12 & \textbf{30.31± 0.15} & \textbf{53.58± 0.14} & \textbf{42.91} & 6.34 ± 0.04 \\ \hline
\end{tabular}
\end{table}

We selected Qwen2.5-1.5B-Instruct to conduct a sensitivity analysis with multiple different hyperparameter settings for both QEMPO and QEMPO-KL. For QEMPO, we set \(\frac{1}{\lambda}\) to \{ 1e-2, 6e-3, 2e-3, 1e-3 $\}$ . For QEMPO-KL, we set \((\frac{1}{\lambda_{1}}, \frac{\lambda_{2}}{\lambda_{1}})\) to  \{ (2e-3, 1e-2), (6e-3, 1e-2), (4e-3, 6e-3), (4e-3, 1.2e-2)  \}. The experimental results are shown in \cref{tab:ablation}. It can be observed that while selecting different hyperparameters has some impact on output diversity and quality, the overall variation is not significant. Compared to RLHF's average diversity of 40.36, all results from QEMPO and QEMPO-KL perform better. Moreover, in terms of quality, all results from QEMPO are no lower than RLHF's score of 6.30, while only one result on QEMPO-KL is slightly lower than RLHF. 

\subsection{Experiments in Online Mode}
\subsection{Setup}
\textbf{Base model and training data.} We selected Qwen2.5-7B-Instruct as the base model. We utilized a combined total of 12,000 samples from the training sets of GSM8K\citep{gsm8k} and MATH500\citep{math500}, each containing a given question and its corresponding answer.

\textbf{Implementation details in online mode.} For all experiments, we set the training batch size to 1024, the number of training epochs to 3, and the learning rate to 1e-7 using a constant learning rate schedule. For each query, we generate 10 responses with the temperature set to 1 during inference. We employ the GRPO method for advantage estimation. For the hyperparameters in RLHF, QEMPO, and QEMPO-KL, our settings are consistent with those in the offline mode. We employ xverify \citep{xverify} to validate whether the reasoning matches the given answer.

% For all experiments, we set the training batch size to 1024, the number of training epochs to 3, and the learning rate to 1e-7 using a constant learning rate schedule. For each query, we generate 10 responses with the temperature set to 1 during inference. We employ the GRPO method for advantage estimation. For RLHF, the KL coefficient is set to 0.01. For QEMPO, we set \( \frac{1}{\lambda} \) to 4e-3. For QEMPO-KL, we set \( \frac{1}{\lambda_1} \) to 4e-3 and \( \frac{\lambda_2}{\lambda_1} \) to 1e-2. We employ xverify \citep{xverify} to validate whether the reasoning matches the given answer.

\textbf{Evaluation metrics.} we employ pass@k to evaluate both the quality and diversity of the model. For the n in pass@k, we set it to 100.

\subsection{Performance}

 \textbf{Quality constraints.} During our experiments, we observed that training QEMPO and QEMPO-KL could lead to an excessive increase in policy entropy, resulting in a sharp decline in reward. We attribute this phenomenon to the fact that entropy increase was encouraged even for low-quality outputs during training. This creates an issue where, when the reward term is zero, the loss can be reduced by increasing entropy. However, increasing entropy under low-quality outputs often leads to further degradation in policy output quality. To prevent over-optimization of entropy for low-quality outputs, we compute the variance term \( \mathrm{var}(\log \pi) \) only on data where the outputs are correct. We found that this modification significantly stabilizes the overall training process.

\textbf{Performance}. The experimental results are shown in \cref{fig:pass-k}. It can be observed that compared to RLHF, QEMPO and QEMPO-KL show little difference in pass@1 across various datasets. However, as k increases, the performance of QEMPO and QEMPO-KL begins to surpass that of RLHF. Moreover, on more challenging datasets, the improvement brought by QEMPO and QEMPO-KL becomes more pronounced as k increases. This indicates that for difficult problems, increasing diversity can also enhance model performance.

\begin{figure}[h]
    \centering
    \includegraphics[height=0.36\textwidth]{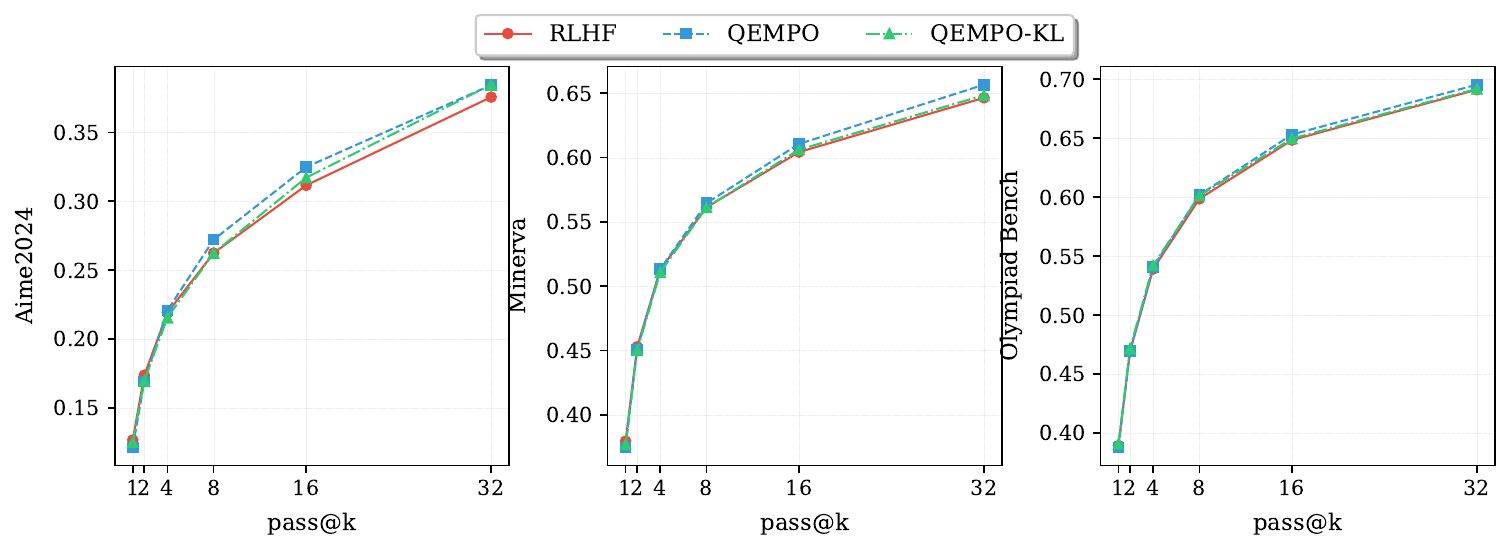}
    \caption{Pass@k results of RLHF, QEMPO, and QEMPO-KL on different datasets.}
    \label{fig:pass-k}
\end{figure}

\section{Related Work}
Existing research indicates that alignment methods while significantly improving the quality, helpfulness, and safety of LLM outputs, are widely observed to reduce the diversity of the generated content \citep{li2025preserving,murthy2024one,padmakumar2023does,slocum2025diverse,sun2025curiosity,xu2025understanding,kirk2023understanding}. 
%This decrease in diversity is manifested as a reduction in the lexical, syntactic, and conceptual variability of the model's outputs, leading to content that is homogeneous and repetitive.
In response to the decline in diversity attributed to alignment methodologies, researchers have introduced a range of strategies.
%\textbf{Sampling methods}. Sampling methods are a post-processing technique that doesn't require modifying a model's training data or training approach. Traditional methods, such as Temperature Sampling, Top-p Sampling, and Top-k Sampling, increase diversity by adjusting the randomness of token selection. However, these methods can lead to incoherent or repetitive outputs, especially when the temperature is set high. To overcome these limitations, new sampling methods have been developed, including min-p Sampling\citep{min-p}, chain-of-Thought Decoding\citep{cot-Decoding}, and contrastive decoding\citep{Contrastive-decoding}. 

\textbf{Diversity-oriented data augmentation}. The core idea behind these methodologies centers on enhancing the diversity of large language model (LLM) outputs through the strategic expansion and diversification of training datasets. Research has shown that combining data augmentation with SFT \citep{Diversity-oriented} and DPO \citep{Diverse-preference-optimization} can effectively boost the diversity of LLM-generated content.

\textbf{Diversity-Oriented Training Methods}. These methods aim to boost the diversity of an LLM's generated output by designing specific reward functions or optimization objectives. Soft Preference Learning \citep{slocum2025diverse} decouples the entropy and cross-entropy terms in the KL penalty, allowing for fine-grained control over the diversity of the LLM's output. GEM \citep{li2025preserving} uses game theory to design a novel SFT method that preserves the diversity of the model's output. Curiosity-Driven RLHF \citep{sun2025curiosity} is a framework that combines an intrinsic reward for novel states with the traditional sparse external reward during the RLHF training phase.  \textcolor{black}{The trade-off between diversity and quality is examined through the combination of negative log-likelihood training and temperature scaling in \citep{verine2025improving}.}

 \textbf{Entropy}. The intrinsic relationship between entropy and the diversity of LLM outputs is direct and significant. Generally, higher entropy in a model's predicted token distribution is associated with greater diversity in the generated text, as it allows the model to explore a wider range of token choices at each step \citep{senautomating}. Conversely, low-entropy distributions often lead to more deterministic, repetitive, and less diverse outputs \citep{zhou2024balancing}. However, increasing the entropy of the output distribution can sometimes result in incoherent outputs, thus lowering the quality of the generated text \citep{carlsson2024hyperfitting}.

\section{Conclusion}
In this work, we revisit the roles of quality and diversity in alignment tasks. Theoretically, we demonstrate that quality and diversity are two essential and complementary components of alignment. For RLHF, we prove that the optimal policy form coincides with that of a KL minimization problem subject to quality constraints. To further enhance the entropy of policies under this optimal form, we propose two novel methods: QEMPO and QEMPO-KL. QEMPO is designed to maximize policy entropy under quality constraints, while QEMPO-KL incorporates both quality and KL constraints to maximize entropy. Theoretically, we establish that under certain conditions, the policies derived from QEMPO and QEMPO-KL exhibit higher entropy than those obtained with RLHF. Through both offline and online experiments, we demonstrate that the proposed methods achieve improved diversity while maintaining competitive performance in quality relative to RLHF.

\bibliographystyle{iclr2024_conference}
\bibliography{iclr2024_conference}

@article{kirk2023understanding,
  title={Understanding the effects of rlhf on llm generalisation and diversity},
  author={Kirk, Robert and Mediratta, Ishita and Nalmpantis, Christoforos and Luketina, Jelena and Hambro, Eric and Grefenstette, Edward and Raileanu, Roberta},
  journal={arXiv preprint arXiv:2310.06452},
  year={2023}
}

@article{xu2025understanding,
  title={Understanding the Effects of RLHF on the Quality and Detectability of LLM-Generated Texts},
  author={Xu, Beining and Zubiaga, Arkaitz},
  journal={arXiv preprint arXiv:2503.17965},
  year={2025}
}

@article{padmakumar2023does,
  title={Does writing with language models reduce content diversity?},
  author={Padmakumar, Vishakh and He, He},
  journal={arXiv preprint arXiv:2309.05196},
  year={2023}
}

@article{murthy2024one,
  title={One fish, two fish, but not the whole sea: Alignment reduces language models' conceptual diversity},
  author={Murthy, Sonia K and Ullman, Tomer and Hu, Jennifer},
  journal={arXiv preprint arXiv:2411.04427},
  year={2024}
}

@article{sun2025curiosity,
  title={Curiosity-Driven Reinforcement Learning from Human Feedback},
  author={Sun, Haoran and Chai, Yekun and Wang, Shuohuan and Sun, Yu and Wu, Hua and Wang, Haifeng},
  journal={arXiv preprint arXiv:2501.11463},
  year={2025}
}

@inproceedings{slocum2025diverse,
  title={Diverse preference learning for capabilities and alignment},
  author={Slocum, Stewart and Parker-Sartori, Asher and Hadfield-Menell, Dylan},
  booktitle={The Thirteenth International Conference on Learning Representations},
  year={2025}
}

@inproceedings{li2025preserving,
  title={Preserving Diversity in Supervised Fine-Tuning of Large Language Models},
  author={Li, Ziniu and Chen, Congliang and Xu, Tian and Qin, Zeyu and Xiao, Jiancong and Luo, Zhi-Quan and Sun, Ruoyu},
  booktitle={ICLR},
  year={2025}
}

@article{Diversity-oriented,
  title={Diversity-oriented data augmentation with large language models},
  author={Wang, Zaitian and Zhang, Jinghan and Zhang, Xinhao and Liu, Kunpeng and Wang, Pengfei and Zhou, Yuanchun},
  journal={arXiv preprint arXiv:2502.11671},
  year={2025}
}

@article{Diverse-preference-optimization,
  title={Diverse preference optimization},
  author={Lanchantin, Jack and Chen, Angelica and Dhuliawala, Shehzaad and Yu, Ping and Weston, Jason and Sukhbaatar, Sainbayar and Kulikov, Ilia},
  journal={arXiv preprint arXiv:2501.18101},
  year={2025}
}

@article{rlhf,
  title={Training language models to follow instructions with human feedback},
  author={Ouyang, Long and Wu, Jeffrey and Jiang, Xu and Almeida, Diogo and Wainwright, Carroll and Mishkin, Pamela and Zhang, Chong and Agarwal, Sandhini and Slama, Katarina and Ray, Alex and others},
  journal={Advances in neural information processing systems},
  volume={35},
  pages={27730--27744},
  year={2022}
}

@article{dpo,
  title={Direct preference optimization: Your language model is secretly a reward model},
  author={Rafailov, Rafael and Sharma, Archit and Mitchell, Eric and Manning, Christopher D and Ermon, Stefano and Finn, Chelsea},
  journal={Advances in neural information processing systems},
  volume={36},
  pages={53728--53741},
  year={2023}
}

@article{Qwen3,
  title={Qwen3 technical report},
  author={Yang, An and Li, Anfeng and Yang, Baosong and Zhang, Beichen and Hui, Binyuan and Zheng, Bo and Yu, Bowen and Gao, Chang and Huang, Chengen and Lv, Chenxu and others},
  journal={arXiv preprint arXiv:2505.09388},
  year={2025}
}

@article{deepseek-r1,
  title={Deepseek-r1: Incentivizing reasoning capability in llms via reinforcement learning},
  author={Guo, Daya and Yang, Dejian and Zhang, Haowei and Song, Junxiao and Zhang, Ruoyu and Xu, Runxin and Zhu, Qihao and Ma, Shirong and Wang, Peiyi and Bi, Xiao and others},
  journal={arXiv preprint arXiv:2501.12948},
  year={2025}
}

@inproceedings{senautomating,
  title={Automating Evaluation of Creativity in LLMs with Semantic Entropy and Efficient Multi-Agent Judge},
  author={Sen, Tan Min and Chun, Zachary Choy Kit and Saikia, Swaagat Bikash and Alsagoff, Syed Ali Redha and Mohor, Banerjee and Wangsajaya, Nadya Yuki and Chan, Alvin},
  booktitle={Workshop on Reasoning and Planning for Large Language Models},
  year={2025}
}

@article{zhou2024balancing,
  title={Balancing diversity and risk in llm sampling: How to select your method and parameter for open-ended text generation},
  author={Zhou, Yuxuan and Keuper, Margret and Fritz, Mario},
  journal={arXiv preprint arXiv:2408.13586},
  year={2024}
}

@article{carlsson2024hyperfitting,
  title={The hyperfitting phenomenon: Sharpening and stabilizing llms for open-ended text generation},
  author={Carlsson, Fredrik and Liu, Fangyu and Ward, Daniel and Kurfali, Murathan and Nivre, Joakim},
  journal={arXiv preprint arXiv:2412.04318},
  year={2024}
}

@article{ultrafeedback,
  title={Ultrafeedback: Boosting language models with high-quality feedback},
  author={Cui, Ganqu and Yuan, Lifan and Ding, Ning and Yao, Guanming and Zhu, Wei and Ni, Yuan and Xie, Guotong and Liu, Zhiyuan and Sun, Maosong},
  year={2023}
}

@article{qwen2.5,
  title={Qwen2.5 Technical Report},
  author={Team, Qwen},
  journal={arXiv preprint arXiv:2412.15115},
  year={2024}
}

@article{dubey2024llama,
  title={The llama 3 herd of models},
  author={Dubey, Abhimanyu and Jauhri, Abhinav and Pandey, Abhinav and Kadian, Abhishek and Al-Dahle, Ahmad and Letman, Aiesha and Mathur, Akhil and Schelten, Alan and Yang, Amy and Fan, Angela and others},
  journal={arXiv e-prints},
  pages={arXiv--2407},
  year={2024}
}

@article{llama3.2,
  title={Llama 3.2: Revolutionizing edge ai and vision with open, customizable models},
  author={Meta, AI},
  journal={Meta AI Blog. Retrieved December},
  volume={20},
  pages={2024},
  year={2024}
}

@article{metric,
  title={Benchmarking linguistic diversity of large language models},
  author={Guo, Yanzhu and Shang, Guokan and Clavel, Chlo{\'e}},
  journal={arXiv preprint arXiv:2412.10271},
  year={2024}
}

@article{zephyr,
  title={Zephyr: Direct distillation of lm alignment},
  author={Tunstall, Lewis and Beeching, Edward and Lambert, Nathan and Rajani, Nazneen and Rasul, Kashif and Belkada, Younes and Huang, Shengyi and Von Werra, Leandro and Fourrier, Cl{\'e}mentine and Habib, Nathan and others},
  journal={arXiv preprint arXiv:2310.16944},
  year={2023}
}

@article{adam,
  title={Adam: A method for stochastic optimization},
  author={Kingma, Diederik P},
  journal={arXiv preprint arXiv:1412.6980},
  year={2014}
}

@article{gvpo,
  title={GVPO: Group variance policy optimization for large language model post-training},
  author={Zhang, Kaichen and Hong, Yuzhong and Bao, Junwei and Jiang, Hongfei and Song, Yang and Hong, Dingqian and Xiong, Hui},
  journal={arXiv preprint arXiv:2504.19599},
  year={2025}
}

@article{xverify,
  title={xverify: Efficient answer verifier for reasoning model evaluations},
  author={Chen, Ding and Yu, Qingchen and Wang, Pengyuan and Zhang, Wentao and Tang, Bo and Xiong, Feiyu and Li, Xinchi and Yang, Minchuan and Li, Zhiyu},
  journal={arXiv preprint arXiv:2504.10481},
  year={2025}
}

@article{gsm8k,
  title={Training verifiers to solve math word problems},
  author={Cobbe, Karl and Kosaraju, Vineet and Bavarian, Mohammad and Chen, Mark and Jun, Heewoo and Kaiser, Lukasz and Plappert, Matthias and Tworek, Jerry and Hilton, Jacob and Nakano, Reiichiro and others},
  journal={arXiv preprint arXiv:2110.14168},
  year={2021}
}

@inproceedings{math500,
  title={Let's verify step by step},
  author={Lightman, Hunter and Kosaraju, Vineet and Burda, Yuri and Edwards, Harrison and Baker, Bowen and Lee, Teddy and Leike, Jan and Schulman, John and Sutskever, Ilya and Cobbe, Karl},
  booktitle={The Twelfth International Conference on Learning Representations},
  year={2023}
}

@article{entrpo,
  title={EnTRPO: trust region policy optimization method with entropy regularization},
  author={Roostaie, Sahar and Ebadzadeh, Mohammad Mehdi},
  journal={arXiv preprint arXiv:2110.13373},
  year={2021}
}

@article{ERC-TRPO,
  title={Trust region policy optimization via entropy regularization for Kullback--Leibler divergence constraint},
  author={Xu, Haotian and Xuan, Junyu and Zhang, Guangquan and Lu, Jie},
  journal={Neurocomputing},
  volume={589},
  pages={127716},
  year={2024},
  publisher={Elsevier}
}

@article{pg,
  title={Policy gradient methods for reinforcement learning with function approximation},
  author={Sutton, Richard S and McAllester, David and Singh, Satinder and Mansour, Yishay},
  journal={Advances in neural information processing systems},
  volume={12},
  year={1999}
}

@article{simpo,
  title={Simpo: Simple preference optimization with a reference-free reward},
  author={Meng, Yu and Xia, Mengzhou and Chen, Danqi},
  journal={Advances in Neural Information Processing Systems},
  volume={37},
  pages={124198--124235},
  year={2024}
}

@article{verine2025improving,
  title={Improving diversity in language models: When temperature fails, change the loss},
  author={Verine, Alexandre and Bronnec, Florian Le and Zheng, Kunhao and Allauzen, Alexandre and Chevaleyre, Yann and Negrevergne, Benjamin},
  journal={arXiv preprint arXiv:2508.09654},
  year={2025}
}

@inproceedings{zhang2021trading,
  title={Trading off diversity and quality in natural language generation},
  author={Zhang, Hugh and Duckworth, Daniel and Ippolito, Daphne and Neelakantan, Arvind},
  booktitle={Proceedings of the Workshop on Human Evaluation of NLP Systems (HumEval)},
  pages={25--33},
  year={2021}
}

@article{shypula2025evaluating,
  title={Evaluating the diversity and quality of llm generated content},
  author={Shypula, Alexander and Li, Shuo and Zhang, Botong and Padmakumar, Vishakh and Yin, Kayo and Bastani, Osbert},
  journal={arXiv preprint arXiv:2504.12522},
  year={2025}
}

@article{mmlu,
  title={Measuring massive multitask language understanding},
  author={Hendrycks, Dan and Burns, Collin and Basart, Steven and Zou, Andy and Mazeika, Mantas and Song, Dawn and Steinhardt, Jacob},
  journal={arXiv preprint arXiv:2009.03300},
  year={2020}
}

\appendix

\section{Reproducibility statement}
All theoretical proofs are provided in the appendix. The training data and models designed in this experiment are publicly available. We use trl to implement the offline model experiments for QEMPO and QEMPO-KL. We use verl to implement the online model experiments for QEMPO and QEMPO-KL. To facilitate reproducibility, we provide the following simplified implementation code.
\begin{lstlisting}[caption={A Simple QEMPO-KL Code Implementation in offline mode}, label={lst:policy_loss}]
def compute_qempo_kl_loss(beta1, beta2, chosen_logps, rejected_logps, ref_chosen_logps, ref_rejected_logps, **kwargs):
    # beta1: \frac{1}{\lambda_{1}} in QEMPO-KL
    # beta2:  \frac{\lambda_{2}}{\lambda_{1}} in QEMPO-KL
    chosen_logratios = beta1 * chosen_logps + beta2 * (chosen_logps - ref_chosen_logps)
    rejected_logratios = beta1 * rejected_logps + beta2 * (rejected_logps - ref_rejected_logps)
    logits = chosen_logratios - rejected_logratios
    return -F.logsigmoid(logits)
\end{lstlisting}
\begin{lstlisting}[caption={A Simple QEMPO Code Implementation in offline mode}, label={lst:policy_loss}]
def compute_qempo_loss(beta, chosen_logps, rejected_logps, **kwargs):
    # beta: \frac{1}{\lambda} in QEMPO
    chosen_logratios = beta * chosen_logps
    rejected_logratios = beta * rejected_logps
    logits = chosen_logratios - rejected_logratios
    return -F.logsigmoid(logits)
\end{lstlisting}
\begin{lstlisting}[caption={A Simple QEMPO-KL Code Implementation within a single group in online mode}, label={lst:policy_loss}]
def compute_qempo_policy_loss(beta1, beta2, advantages, reward, old_log_prob, log_prob, eos_mask, **kwargs):
    # beta1: \frac{1}{\lambda_{1}} in QEMPO-KL
    # beta2:  \frac{\lambda_{2}}{\lambda_{1}} in QEMPO
    score = log_prob - torch.mean(log_prob)
    score_old = old_log_prob - torch.mean(old_log_prob)
    adv = torch.mean(torch.sum(-advantages * log_prob * eos_mask, dim=1))
    cov = beta2 * torch.mean(score * score_old)
    if sum(reward).item() == len(reward_list):
        var = torch.mean(score * score)
    else:
        var = torch.tensor(0.0)
    return 2 * (beta1 + beta2) * (adv + beta2 * cov + (beta1 + beta2) * var)
\end{lstlisting}
\begin{lstlisting}[caption={A Simple QEMPO Code Implementation within a single group in online mode}, label={lst:policy_loss}]
def compute_qempo_policy_loss(beta, advantages, reward, log_prob, eos_mask, **kwargs):
    # beta: \frac{1}{\lambda} in QEMPO
    score = log_prob - torch.mean(log_prob)
    adv = torch.mean(torch.sum(-advantages * log_prob * eos_mask, dim=1))
    if sum(reward).item() == len(reward_list):
        var = torch.mean(score * score)
    else:
        var = torch.tensor(0.0)
    return 2 * beta * (adv + beta * var)
\end{lstlisting}

\section{Discussion}
\label{sec:diss}
We briefly discuss the distinctions between our method and several existing approaches in this section. Soft Preference Learning \citep{slocum2025diverse} decouples the entropy and cross-entropy terms in the KL penalty, allowing for fine-grained control over the diversity of the LLM’s output.  Unlike SPL, we propose a distinct optimization objective—quality-constrained entropy maximization. 

\textcolor{black}{SimPO \citep{simpo} employs the average log probability of a sequence as an implicit reward to eliminate dependency on a reference model. In contrast, QEMPO removes the reference model with the aim of increasing the policy's entropy.}

EnTRPO\citep{entrpo} combines entropy with the reward value as the optimization objective while incorporating KL divergence as a constraint. ERC-TRPO\citep{ERC-TRPO} treats the difference between KL divergence and entropy as a constraint and maximizes the reward as the objective. In contrast to both EnTRPO and ERC-TRPO, QEMPO-KL exclusively optimizes for entropy alone, subject to both quality and KL constraints. QEMPO, meanwhile, further simplifies this framework by entirely omitting the KL constraint.

\section{Performance on MMLU}
\begin{table}[htbp]
\centering
\caption{MMLU evaluation results. QEMPO and QEMPO-KL do not negatively affect general model utility.}
\label{tab:mmlu}
\begin{tabular}{lc}
\hline
\textbf{Model} & \textbf{MMLU} \\
\hline
Llama-3.1-8B-Instruct & 66.31 \\
$\quad \vdash$ QEMPO & \textbf{66.42} \\
$\quad \llcorner$ QEMPO-KL & 66.37 \\
\hline
Llama-3.2-1B-Instruct & 46.21 \\
$\quad \vdash$ QEMPO & 46.23 \\
$\quad \llcorner$QEMPO-KL & \textbf{46.28} \\
\hline
Qwen2.5-7B-Instruct & 74.12 \\
$\quad \vdash$QEMPO & \textbf{74.16} \\
$\quad \llcorner$ QEMPO-KL & {74.14} \\
\hline
Qwen2.5-1.5B-Instruct & {60.46} \\
$\quad \vdash$ QEMPO & \textbf{60.53} \\
$\quad \llcorner$ QEMPO-KL & {60.49} \\
\hline
\end{tabular}
\end{table}

\section{Proof of \cref{prop:kl-rlhf}}
\label{proof:p1}
\propone*
\begin{proof}
We first transform the constraint equivalently, so the original optimization objective can be converted to:
\begin{align}
    \min_{\pi} \mathrm{KL}[\pi(y|x) || \pi_{\mathrm{ref}}(y|x)] \quad s.t. \quad \mathrm{R}-\mathbb{E}_{\pi(y|x)}[r(x,y)] \le 0
\end{align}
According to the method of Lagrange multipliers, the following functional can be constructed:
\begin{align}
    L(\pi(y|x)) = \sum_{y} \pi(y|x) \ln\frac{\pi(y|x)}{\pi_{ref}(y|x)} +\lambda(\mathrm{R} -\sum_{y}\pi(y|x) r(x,y) )
\end{align}
where $\lambda>0$.
Since KL divergence is a convex function for $\pi(y|x)$, its minimum is achieved when the gradient is zero. Setting the derivative of $L(\pi(y|x))$ with respect to $\pi(y|x)$ to zero, we get:
\begin{align}
    \frac{\partial L(\pi(y|x))}{ \partial\pi(y|x)} = \ln \pi(y|x)+1-\ln \pi_{ref}(y|x) -  \lambda r(x,y) =0.
\end{align}
Simplifying the above expression yields:
\begin{align} 
  \pi (y|x)=  \pi_{\mathrm{ref}}(y|x) \exp( \lambda r(x,y)-1).
\end{align}
Since $\pi(y|x)$ is an LLM, it can be regarded as a distribution. Furthermore, we can obtain:
\begin{align} 
  \sum_{y}\pi (y|x)=  \sum_{y}\pi_{\mathrm{ref}}(y|x) \exp( \lambda r(x,y)-1)=1
\end{align}
Finally, we can get: 
\begin{align}
 \pi (y|x) &= \frac{\pi_{\mathrm{ref}}(y|x) \exp( \lambda r(x,y)-1)}{1}
 \\
 &=\frac{\pi_{\mathrm{ref}}(y|x) \exp( \lambda r(x,y)-1)}{ \sum_{y}\pi_{\mathrm{ref}}(y|x) \exp( \lambda r(x,y)-1)} 
 \\
 &=\frac{\pi_{\mathrm{ref}}(y|x) \exp( \lambda r(x,y))}{ \sum_{y}\pi_{\mathrm{ref}}(y|x) \exp( \lambda r(x,y))}
 \\
 &= \frac{\pi_{\mathrm{ref}}(y|x) \exp( \lambda r(x,y))}{ Z(x)}
\end{align}
where $Z(x)=\sum_{y}\pi_{\mathrm{ref}}(y|x) \exp( \lambda r(x,y))$.
\end{proof}

\section{Proof of \cref{prop:one}}
\label{proof:2}
\propOne*

\begin{proof}
We transform the optimization objective of minimizing KL divergence into a maximization form:
\begin{align}
\min_{\pi} \mathrm{KL}(\pi(y|x) || \pi^{*}(y|x)) &=\max_{\pi} - \mathrm{KL}(\pi(y|x) || \pi^{*}(y|x)) \\
&=
\max_{\pi}  \underbrace{\sum _{y} -\pi(y|x)\ln \pi(y|x)}_{H_{\pi}(Y|X)} +  \sum _{y} \pi(y|x)\ln \pi^{*}(y|x)
\end{align}
For \(\sum_{y} \pi(y|x) \ln \pi^{*}(y|x)\), we can expand it as:
\begin{align}
    \sum _{y} \pi(y|x)\ln \pi^{*}(y|x) = \sum _{y\in Y^{+}} \pi(y|x)\ln \pi^{*}(y|x) +  \sum _{y\in Y^{-}} \pi(y|x)\ln \pi^{*}(y|x).
\end{align}
According to the two attributes of \(\pi^{*}(y|x)\) (quality and diversity), together with the assumption that \(\pi^{*}\) is uniform over \(Y^{-}\), we can infer that
\begin{align}
    \pi^{*}(y|x) = \left\{\begin{matrix}
 \frac{1- \epsilon}{|Y^{+}|} & y\in Y^{+} \\
  \frac{\epsilon}{|Y^{-}|} & y\in Y^{-} &
\end{matrix}\right.
\end{align}
where $\frac{1- \epsilon}{|Y^{+}|}  \gg \frac{\epsilon}{|Y^{-}|}$. 
Furthermore, we can obtain:
\begin{align}
\sum _{y} \pi(y|x)\ln \pi^{*}(y|x) 
&=\sum _{y\in Y^{+}} \pi(y|x)\ln \frac{1- \epsilon}{|Y^{+}|}   +  \sum _{y\in Y^{-}} \pi(y|x)  \ln \frac{\epsilon}{|Y^{-}|} 
\end{align}
Let \(P_{w} = \sum_{y \in Y^{+}} \pi(y|x) \in (0,1)\). Substituting this into the equation above, we obtain:
\begin{align}
\sum _{y} \pi(y|x)\ln \pi^{*}(y|x) 
&=P_{w}\ln \frac{1- \epsilon}{|Y^{+}|}   + (1-P_{w}) \ln \frac{\epsilon}{|Y^{-}|} 
\\
&=P_{w}(\ln \frac{1- \epsilon}{|Y^{+}|} - \ln \frac{\epsilon}{|Y^{-}|} ) + \ln \frac{\epsilon}{|Y^{-}|} 
\end{align}
Since $\ln \frac{1- \epsilon}{|Y^{+}|} - \ln \frac{\epsilon}{|Y^{-}|} >0$, it can be shown that \(\sum_{y} \pi(y|x)\ln \pi^{*}(y|x)\) increases as \(P_{w}\) increases. Therefore, improving the output quality of \(\pi(y|x)\) can increase \(\sum_{y} \pi(y|x)\ln \pi^{*}(y|x)\), i.e.,
\begin{align}
    \max_{\pi} \sum _{y} \pi(y|x)\ln \pi^{*}(y|x) 
&\propto \max P_{w} =\max_{\pi}\sum _{y\in Y^{+}} \pi(y|x) .
\end{align}
For diversity, we can use \(H_{\pi}(Y|X)\) to represent it. This is because when the output distribution of \(\pi(y|x)\) is more uniform, \(H_{\pi}(Y|X)\) becomes larger.

In summary, the alignment objective $ \mathrm{KL}(\pi(y|x) || \pi^{*}(y|x))$ can be minimized by maximizing both quality and diversity.

\end{proof}

\section{Proof of Corollary \ref{cly:one}}
\label{Corollary:3.1}
\begin{proof}
For Policy Gradient, the optimization objective is:
\begin{align}
    \max_{\pi}\sum _{y} r(x,y) \pi(y|x).
\end{align}
We can further express the above equation as:
\begin{align}
    \max_{\pi}\sum _{y\in Y } r(x,y) \pi(y|x)= \max_{\pi}\sum _{y\in Y^{+}} r(x,y) \pi(y|x) + \max_{\pi}\sum _{y\in Y^{-}} r(x,y) \pi(y|x).
\end{align}
If the reward function is defined as $r(x,y) =\left\{\begin{matrix}
 1 & y \in Y^{+}\\
 0 & y \in Y^{-}
\end{matrix}\right.$, then 
\begin{align}
    \max_{\pi}\sum _{y\in Y } r(x,y) \pi(y|x) &= \max_{\pi}\sum _{y\in Y^{+}} 1* \pi(y|x) + \max_{\pi}\sum _{y\in Y^{-}} 0* \pi(y|x) \\
    &= \max_{\pi}\sum _{y\in Y^{+}} \pi(y|x)
    .
\end{align}
Based on the above analysis, we can conclude that Policy Gradient only optimizes the quality component in alignment tasks.
\end{proof}

\section{Proof of \cref{prop:kl-qempo}}
\label{p3}
\propTwo*
\begin{proof}
We first convert the original optimization problem from maximization to minimization and equivalently transform the constraints, so the original optimization objective can be transformed into:
\begin{align}
    \min_{\pi} -H_{\pi}(Y|X) \quad s.t.  \left\{\begin{matrix}
 \mathrm{R} - \mathbb{E}_{\pi(y|x)}[r(x,y)] \le 0  
\\
\mathrm{KL}(\pi_{}\parallel \pi_{\mathrm{ref}} )- \mathrm{K} \le 0
\end{matrix}\right..
\end{align}
According to the method of Lagrange multipliers, the following functional can be constructed:
\begin{align}
    L(\pi(y|x)) = \sum_{y} \pi(y|x) \ln \pi(y|x) +\lambda_{1}(\mathrm{R} -\sum_{y}\pi(y|x) r(x,y) ) + \lambda_{2}(\sum_{y} \pi(y|x) \ln\frac{\pi(y|x)}{\pi_{ref}(y|x)}-\mathrm{K})
\end{align}
where $\lambda_{1}>0$ and $\lambda_{2}>0$. Since $-H_{\pi}(Y|X) $ is a convex function for $\pi(y|x)$, its minimum is achieved when the gradient is zero. Setting the derivative of $L(\pi(y|x))$ with respect to $\pi(y|x)$ to zero, we get:
\begin{align}
\frac{\partial L(\pi(y|x))}{ \partial\pi(y|x)} = \ln \pi(y|x)+1-  \lambda_{1} r(x,y) + \lambda_{2} \ln \pi(y|x)+1 - \lambda_{2}\pi_{ref}(y|x)=0.
\end{align}
Simplifying the above equation yields:
\begin{align}
    \pi(y|x)=\pi_{\mathrm{ref}}^{\frac{\lambda_{2}}{\lambda_{2}+1}} \exp( \frac{\lambda_{1}}{\lambda_{2}+1} r(x,y)-2).
\end{align}
Since $\pi(y|x)$ is an LLM, it therefore satisfies: 
\begin{align}
    \sum_{y}\pi(y|x) = 1.
\end{align}
Furthermore, we can obtain:
\begin{align}
 \pi (y|x) &= \frac{\pi_{\mathrm{ref}}^{\frac{\lambda_{2}}{\lambda_{2}+1}} \exp( \frac{\lambda_{1}}{\lambda_{2}+1} r(x,y)-2)}{1}
 \\
 &=\frac{\pi_{\mathrm{ref}}^{\frac{\lambda_{2}}{\lambda_{2}+1}} \exp( \frac{\lambda_{1}}{\lambda_{2}+1} r(x,y)-2)}{ \sum_{y}\pi_{\mathrm{ref}}^{\frac{\lambda_{2}}{\lambda_{2}+1}} \exp( \frac{\lambda_{1}}{\lambda_{2}+1} r(x,y)-2)} 
 \\
 &=\frac{\pi_{\mathrm{ref}}^{\frac{\lambda_{2}}{\lambda_{2}+1}} \exp( \frac{\lambda_{1}}{\lambda_{2}+1} r(x,y))}{ \sum_{y}\pi_{\mathrm{ref}}^{\frac{\lambda_{2}}{\lambda_{2}+1}} \exp( \frac{\lambda_{1}}{\lambda_{2}+1} r(x,y))}
 \\
 &= \frac{\pi_{\mathrm{ref}}^{\frac{\lambda_{2}}{\lambda_{2}+1}} \exp( \frac{\lambda_{1}}{\lambda_{2}+1} r(x,y))}{ Z(x)}
\end{align}
where $Z(x)= \sum_{y}\pi_{\mathrm{ref}}^{\frac{\lambda_{2}}{\lambda_{2}+1}} \exp( \frac{\lambda_{1}}{\lambda_{2}+1} r(x,y))$.
\end{proof}

\section{Proof of \cref{prop:entropy-kl}}
\label{p4}
\begin{lemma}\label{lemma1}

\textcolor{black}{
Let \(\mathbf{z} = (z_1, z_2, \dots, z_n)\) be an arbitrary real-valued vector, and define  
$
p_i(s) = \frac{\exp({sz_i})}{\sum_{j=1}^n \exp({sz_j})}=\frac{\exp({z_i})^{s}}{\sum_{j=1}^n \exp({z_j})^{s}},
$  
and  
$  
H(s) = - \sum_{i=1}^n p_i(s) \ln p_i(s).
$   
Then for any \(s_2 > s_1 > 0\), the following holds:
\[
H(s_2) \le H(s_1),
\]
with equality if and only if all components of \(\mathbf{z}\) are equal.}
\end{lemma}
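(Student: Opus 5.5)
The plan is to show that $H(s)$ is non-increasing in $s$ on $(0,\infty)$ by differentiating it directly. Write $\log$-partition $A(s)=\ln\sum_j \exp(s z_j)$, so that $\ln p_i(s) = s z_i - A(s)$. Then
\begin{align}
H(s) = -\sum_i p_i(s)\bigl(s z_i - A(s)\bigr) = A(s) - s\,\mu(s),
\end{align}
where $\mu(s)=\sum_i p_i(s) z_i = A'(s)$ is the mean of $\mathbf{z}$ under $p(s)$. This uses only $\sum_i p_i(s)=1$.

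Next, differentiate. Since $A'(s)=\mu(s)$, we get
\begin{align}
H'(s) = \mu(s) - \mu(s) - s\,\mu'(s) = -s\,\mu'(s).
\end{align}
The standard exponential-family computation, using $p_i'(s)=p_i(s)(z_i-\mu(s))$, gives
\begin{align}
\mu'(s)=A''(s)=\sum_i p_i(s)z_i^2-\mu(s)^2=\mathrm{Var}_{p(s)}(\mathbf{z})\ge 0.
\end{align}
Hence $H'(s) = -s\,\mathrm{Var}_{p(s)}(\mathbf{z}) \le 0$ for $s>0$. Integrating from $s_1$ to $s_2$ yields $H(s_2)\le H(s_1)$.

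For the equality case, note that every $p_i(s)>0$. The variance therefore vanishes at some $s$ if and only if $\mathbf{z}$ is constant, and in that case it vanishes at every $s$. If $\mathbf{z}$ is not constant, then $H'(s)<0$ on all of $[s_1,s_2]$, so the inequality is strict. If all $z_i$ are equal, then $p_i(s)=1/n$ for every $s$ and $H\equiv\ln n$, so equality holds.

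The main obstacle is the equality clause, specifically making sure that strict negativity of $H'$ on the interval gives strictness. This is immediate because $H$ is smooth and $H'<0$ throughout $[s_1,s_2]$; strictness does not depend on $H'$ vanishing at only isolated points. Everything else is routine calculus on a finite sum.
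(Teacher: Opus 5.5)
Your proposal is correct and follows essentially the same route as the paper: both reduce the claim to the identity $H'(s) = -s\,\mathrm{Var}_{p(s)}(\mathbf{z})$, and both obtain the equality clause from the fact that every $p_i(s)>0$, so the variance is positive exactly when $\mathbf{z}$ is non-constant. The only difference is bookkeeping. You write $H(s) = A(s) - sA'(s)$ using the log-partition function, so $H' = -sA''$ follows in one line, whereas the paper differentiates $-\sum_i p_i \ln p_i$ term by term using $p_i' = p_i(z_i - \mathbb{E}[z])$.
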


\begin{proof}
Let \( Z(s) = \sum_{j=1}^n e^{s z_j} \), so that \( p_i(s) = \exp({s z_i}) / Z(s) \). The derivative of \( Z(s) \) with respect to \( s \) is:
\begin{align}
    Z'(s) = \sum_{j=1}^n z_j \exp({s z_j}).
\end{align}
We can compute the derivative of \( p_i(s) \) with respect to \( s \) as:
\begin{align}
    p_i'(s) &= \frac{z_i \exp({s z_i}) Z(s) - \exp({s z_i}) Z'(s)}{Z(s)^2}
    \\
&= \frac{z_i \exp({s z_i})}{Z(s)} - \frac{\exp({s z_i})}{Z(s)} \cdot \frac{Z'(s)}{Z(s)}
\\
&= p_i(s) \left[ z_i - \frac{Z'(s)}{Z(s)}. \right]
\end{align}
Note that \( \frac{Z'(s)}{Z(s)} = \sum_{j=1}^n z_j p_j(s) = \mathbb{E}_{p(s)}[z] \). Therefore:
\begin{align}
    p_i'(s) = p_i(s) \left( z_i - \mathbb{E}[z] \right)
\end{align}
where $\mathbb{E}[z] = \sum_j z_j p_j(s)$.
The derivative of \( H(s) \) with respect to \( s \) can be computed as:
\begin{align}
    H'(s) = - \sum_i \left[ p_i'(s) \ln p_i(s) + p_i(s) \cdot \frac{p_i'(s)}{p_i(s)} \right]
= - \sum_i p_i'(s) \ln p_i(s) - \sum_i p_i'(s)
\end{align}
Note that $\sum_i p_i'(s) = \frac{d}{ds} \sum_i p_i(s) = 0$. Therefore:
\begin{align}
    H'(s) = - \sum_i p_i'(s) \ln p_i(s).
\end{align}
Substituting $p_i'(s) = p_i(s)(z_i - \mathbb{E}[z])$ and $\ln p_i(s) = s z_i - \ln Z(s)$ yields:
\begin{align}
    H'(s) &= - \sum_i p_i(s)(z_i - \mathbb{E}[z]) \left[ s z_i - \ln Z(s) \right] \\
&= - s \sum_i p_i(s)(z_i - \mathbb{E}[z]) z_i + \ln Z(s) \sum_i p_i(s)(z_i - \mathbb{E}[z]).
\end{align}
For the expression \( \sum_i p_i(s) (z_i - \mathbb{E}[z]) z_i \), we have:
\begin{align}
\sum_i p_i(s)(z_i - \mathbb{E}[z]) z_i
&= \sum_i p_i(s)(z_i - \mathbb{E}[z])(z_i - \mathbb{E}[z] + \mathbb{E}[z])
\\
&= \sum_i p_i(s)(z_i - \mathbb{E}[z])^2 + \mathbb{E}[z] \sum_i p_i(s)(z_i - \mathbb{E}[z]).
\end{align}
Since \( \sum_i p_i(s) (z_i - \mathbb{E}[z]) = 0 \), it follows that:
\begin{align}
    - s \sum_i p_i(s)(z_i - \mathbb{E}[z]) z_i = -s\mathrm{Var}_{p(s)}[z]
\end{align}
\begin{align}
    \ln Z(s) \sum_i p_i(s)(z_i - \mathbb{E}[z]) = 0.
\end{align}
Therefore, we obtain:
\begin{align}
    H'(s) = - s \cdot \mathrm{Var}_{p(s)}[z].
\end{align}
When not all components of \(\mathbf{z}\) are equal, we have \(\mathrm{Var}_{p(s)}[z] > 0\). Since \(s > 0\), it follows that:
\begin{align}
    H'(s) \le 0.
\end{align}
This implies that \( H(s) \) is monotonically decreasing for \( s > 0 \). Hence, for any \( s_2 > s_1 > 0 \):
\begin{align}
    H(s_{2}) < H(s_{1}).
\end{align}
When all components of \(\mathbf{z}\) are equal, we have \(\mathrm{Var}_{p(s)}[z] = 0\), and therefore:
\begin{align}
    H'(s) = 0.
\end{align}
This implies that for any \( s_2 > s_1 > 0 \):
\begin{align}
    H(s_{2}) =H(s_{1}).
\end{align}

Combining the above, we complete the proof of the lemma.
\end{proof}

\propFour*
\begin{proof}
We first transform \(\pi_{\mathrm{QEMPO-KL}}\) as follows:
\begin{align} 
  \pi_{\mathrm{QEMPO-KL}}(y|x)
  &=  \frac{\pi_{\mathrm{ref}}(y|x)^{\lambda_{2} /(\lambda_{2}+1)} \exp( \frac{\lambda_{1}}{\lambda_{2}+1} r(x,y))}{Z(x)} 
\\
  &= \frac{\pi_{\mathrm{ref}}(y|x) \exp( \frac{\lambda_{1}}{\lambda_{2}} r(x,y))^{\lambda_{2} /(\lambda_{2}+1)}}{Z(x)}
\\
&=\frac{ \exp( \ln \pi_{\text{ref}}(y|x)+\frac{\lambda_{1}}{\lambda_{2}} r(x,y))^{\lambda_{2} /(\lambda_{2}+1)}}{Z(x)}.
\end{align}

\textcolor{black}{The expression for \(\pi_{\text{RLHF}} (y|x)\) is:
\begin{align}
    \pi_{\text{RLHF}} (y|x)= \frac{\exp( \ln \pi_{ref}(y|x) +\frac{1}{\beta}r(x,y))}{Z(x)}.
\end{align}
Since \(\lambda_2 > 0\), it follows that \(\lambda_2 / (\lambda_2 + 1) < 1\). When \(\frac{1}{\beta} = \frac{\lambda_1}{\lambda_2}\), applying Lemma \ref{lemma1} yields:
\begin{align}
    H_{\pi_{\mathrm{QEMPO-KL}}}(Y|X) \ge  H_{\pi_{\mathrm{RLHF}}}(Y|X).
\end{align}
}

\end{proof}

\section{Proof of \cref{prop:qempo}}
\label{p5}
\propFive*
\begin{proof}
 We first convert the original optimization problem from maximization to minimization and equivalently transform the constraints, so the original optimization objective can be transformed into:
 \begin{align}
 \min_{\pi} - H_{\pi}(Y|X) \quad s.t. \quad \mathrm{R}-\mathbb{E}_{\pi(y|x)}[r(x,y)] \le 0 
 \end{align}
According to the method of Lagrange multipliers, the following functional can be constructed:
\begin{align}
    L(\pi(y|x)) = \sum_{y} \pi(y|x) \ln \pi(y|x) +\lambda(\mathrm{R} -\sum_{y}\pi(y|x) r(x,y) )
\end{align}
where $\lambda>0$. Since $-H_{\pi}(Y|X) $ is a convex function for $\pi(y|x)$, its minimum is achieved when the gradient is zero. Setting the derivative of $L(\pi(y|x))$ with respect to $\pi(y|x)$ to zero, we get:
\begin{align}
\frac{\partial L(\pi(y|x))}{ \partial\pi(y|x)} = \ln \pi(y|x)+1-  \lambda r(x,y) =0.
\end{align}
Simplifying the above equation yields:
\begin{align}
    \pi(y|x)= \exp(\lambda r(x,y)-1).
\end{align}
Since $\pi(y|x)$ is an LLM, it therefore satisfies: 
\begin{align}
    \sum_{y}\pi(y|x) = 1.
\end{align}

Furthermore, we can obtain:
\begin{align}
 \pi (y|x) &= \frac{ \exp(\lambda r(x,y)-1)}{1}
 \\
 &=\frac{ \exp(\lambda r(x,y)-1)}{ \sum_{y} \exp(\lambda r(x,y)-1)} 
 \\
 &=\frac{ \exp(\lambda r(x,y))}{ \sum_{y} \exp(\lambda r(x,y))}
 \\
 &= \frac{\exp(\lambda r(x,y))}{ Z(x)}
\end{align}
where $Z(x)= \sum_{y}\exp(\lambda r(x,y))$.

\end{proof}

\section{Proof of \cref{prop:entropy-kl-2} }
\label{p6}
\propSix*
\begin{proof}
We first transform \(\pi_{\mathrm{QEMPO-KL}}(y|x)\) and \(\pi_{\mathrm{QEMPO}}(y|x)\) respectively as follows:
\begin{align} 
   \pi_{\mathrm{QEMPO-KL}}(y|x)
  &=  \frac{\pi_{\mathrm{ref}}(y|x)^{\lambda_{2} /(\lambda_{2}+1)} \exp( \frac{\lambda_{1}}{\lambda_{2}+1} r(x,y))}{Z(x)} 
\\
  &= \frac{\pi_{\mathrm{ref}}(y|x) \exp( \frac{\lambda_{1}}{\lambda_{2}} r(x,y))^{\lambda_{2} /(\lambda_{2}+1)}}{Z(x)}
\\
&=\frac{ \exp( \ln \pi_{\text{ref}}(y|x)+\frac{\lambda_{1}}{\lambda_{2}} r(x,y))^{\lambda_{2} /(\lambda_{2}+1)}}{Z(x)}.
\\
  \pi_{\mathrm{QEMPO}}(y|x)
  &=  \frac{ \exp( \lambda r(x,y))}{Z(x)} 
\\
  &= \frac{\exp( \frac{\lambda_{1}}{\lambda_{2}} r(x,y))^{\frac{\lambda_{2}}{\lambda_{1}} \lambda}}{Z(x)} 
\end{align}
%
% For any two given outputs \(y_i\) and \(y_j\), we assume \(\pi_{\mathrm{QEMPO-KL}}(y_i|x) \ge \pi_{\mathrm{QEMPO-KL}}(y_j|x)\). The probability ratio of the two outputs is:
% \begin{align}
%     \frac{\pi_{\mathrm{QEMPO-KL}}(y_{i}|x)}{\pi_{\mathrm{QEMPO-KL}}(y_{j}|x)} =\frac{(\pi_{\mathrm{ref}}(y_{i}|x) \exp( \frac{\lambda_{1}}{\lambda_{2}} r(x,y_{i})))^{\lambda_{2} /(\lambda_{2}+1)}}{(\pi_{\mathrm{ref}}(y_{j}|x) \exp( \frac{\lambda_{1}}{\lambda_{2}} r(x,y_{j})))^{\lambda_{2} /(\lambda_{2}+1)}} \ge 1
% \end{align}
%
When \(\frac{\lambda_2}{\lambda_1} \lambda\) is sufficiently small, i.e., \(\frac{\lambda_2}{\lambda_1} \lambda \to 0\), we have:
\begin{align}
    \pi_{\mathrm{ref}}(y|x) ^{\frac{\lambda_{2}}{\lambda_{1}} \lambda} \to 1
\end{align}
\textcolor{black}{
At this point, \(\pi_{\mathrm{QEMPO}}(y|x)\) is expressed as:
\begin{align} 
  \pi_{\mathrm{QEMPO}}(y|x)
=\frac{ \exp(\ln \pi_{\mathrm{ref}}(y|x)+ \frac{\lambda_{1}}{\lambda_{2}} r(x,y)))^{^{\frac{\lambda_{2}}{\lambda_{1}} \lambda}}}{Z(x)} 
\end{align}
When \(\lambda \leq \frac{\lambda_1}{\lambda_2 + 1}\), it means \(\frac{\lambda_2}{\lambda_1} \lambda \leq \frac{\lambda_2}{\lambda_2 + 1}\). Applying Lemma \ref{lemma1} yields:
\begin{align}
    H_{\pi_{\mathrm{QEMPO}}}(Y|X) \geq H_{\pi_{\mathrm{QEMPO-KL}}}(Y|X).
\end{align}
}

\end{proof}

\end{document}